\documentclass{article} % For LaTeX2e
\usepackage{iclr2025_conference,times}

% Optional math commands from https://github.com/goodfeli/dlbook_notation.
%%%%% NEW MATH DEFINITIONS %%%%%

\usepackage{amsmath,amsfonts,bm}

% Mark sections of captions for referring to divisions of figures

% Highlight a newly defined term

% Figure reference, lower-case.

% Figure reference, capital. For start of sentence

% Section reference, lower-case.

% Section reference, capital.

% Reference to two sections.

% Reference to three sections.

% Reference to an equation, lower-case.
\def\eqref#1{equation~\ref{#1}}
% Reference to an equation, upper case

% A raw reference to an equation---avoid using if possible

% Reference to a chapter, lower-case.

% Reference to an equation, upper case.

% Reference to a range of chapters

% Reference to an algorithm, lower-case.

% Reference to an algorithm, upper case.

% Reference to a part, lower case

% Reference to a part, upper case

\def\1{\bm{1}}

% Random variables

% rm is already a command, just don't name any random variables m

% Random vectors

% Elements of random vectors

% Random matrices

% Elements of random matrices

% Vectors

\def\vx{{\bm{x}}}
\def\vy{{\bm{y}}}

% Elements of vectors

% Matrix

% Tensor
\DeclareMathAlphabet{\mathsfit}{\encodingdefault}{\sfdefault}{m}{sl}
\SetMathAlphabet{\mathsfit}{bold}{\encodingdefault}{\sfdefault}{bx}{n}

% Graph

% Sets

% Don't use a set called E, because this would be the same as our symbol
% for expectation.

% Entries of a matrix

% entries of a tensor
% Same font as tensor, without \bm wrapper

% The true underlying data generating distribution

% The empirical distribution defined by the training set

% The model distribution

% Stochastic autoencoder distributions

 % Laplace distribution

\newcommand{\R}{\mathbb{R}}

% Wolfram Mathworld says $L^2$ is for function spaces and $\ell^2$ is for vectors
% But then they seem to use $L^2$ for vectors throughout the site, and so does
% wikipedia.

 % See usage in notation.tex. Chosen to match Daphne's book.

\usepackage{hyperref}
\usepackage{url}

\usepackage{amsmath}
\usepackage{amssymb}
\usepackage{mathtools}
\usepackage{amsthm}
\usepackage[linesnumbered,ruled]{algorithm2e}
\usepackage{enumitem}
\usepackage{graphicx}
\usepackage{subfigure}
\usepackage{wrapfig}
\usepackage{booktabs}
\usepackage{multirow}
\usepackage{color}

\usepackage{microtype}
\usepackage{graphicx}
\usepackage{subfigure}
\usepackage{booktabs} % for professional tables
\usepackage{enumitem}
\usepackage{caption}

\theoremstyle{plain}
\newtheorem{theorem}{Theorem}[section]

\newtheorem{lemma}[theorem]{Lemma}
\newtheorem{corollary}[theorem]{Corollary}
\theoremstyle{definition}
\newtheorem{definition}[theorem]{Definition}
\newtheorem{assumption}[theorem]{Assumption}
\theoremstyle{remark}

\title{InversionGNN: A Dual Path Network for Multi-Property Molecular Optimization}

% Authors must not appear in the submitted version. They should be hidden
% as long as the \iclrfinalcopy macro remains commented out below.
% Non-anonymous submissions will be rejected without review.

\author{Yifan Niu$^{1}$, Ziqi Gao$^{1,2}$, Tingyang Xu$^{3}$, Yang Liu$^{1,2}$, Yatao Bian$^{4}$, Yu Rong$^{3}$, Junzhou Huang$^{5}$,  \\ \textbf{Jia Li}$^{1,2}$\thanks{Correspondence to: Jia Li (\texttt{jialee@ust.hk}).} \\
$^{1}$The Hong Kong University of Science and Technology (Guangzhou) \\
$^{2}$The Hong Kong University of Science and Technology $\,$ \\ $^{3}$DAMO Academy, Alibaba Group $\,$
$^{4}$Tencent AI Lab $\,$ $^{5}$University of Texas, Arlington  \\
}

% The \author macro works with any number of authors. There are two commands
% used to separate the names and addresses of multiple authors: \And and \AND.
%
% Using \And between authors leaves it to \LaTeX{} to determine where to break
% the lines. Using \AND forces a linebreak at that point. So, if \LaTeX{}
% puts 3 of 4 authors names on the first line, and the last on the second
% line, try using \AND instead of \And before the third author name.

\iclrfinalcopy % Uncomment for camera-ready version, but NOT for submission.
\begin{document}

\maketitle

\begin{abstract}
Exploring chemical space to find novel molecules that simultaneously satisfy multiple properties is crucial in drug discovery. However, existing methods often struggle with trading off multiple properties due to the conflicting or correlated nature of chemical properties.  To tackle this issue, we introduce InversionGNN framework, an effective yet sample-efficient \textit{dual-path} graph neural network (GNN) for multi-objective drug discovery.  In the \textbf{direct prediction path} of InversionGNN, we train the model for multi-property prediction to acquire knowledge of the optimal combination of functional groups.
Then the learned chemical knowledge helps the \textbf{inversion generation path} to generate molecules with required properties. 
In order to decode the complex knowledge of multiple properties in the inversion path, we propose a gradient-based Pareto search method to balance conflicting properties and generate Pareto optimal molecules. 
Additionally, InversionGNN is able to search the full Pareto front approximately in discrete chemical space. Comprehensive experimental evaluations show that InversionGNN is both effective and sample-efficient in various discrete multi-objective settings including drug discovery. The code is available at \href{https://github.com/ivanniu/InversionGNN}{https://github.com/ivanniu/InversionGNN}.

\end{abstract}

\section{Introduction}\label{introduction}
Molecular optimization refers to the process of systematically modifying the structure of a given molecule to enhance its properties in practical drug discovery. It combines known chemical knowledge with innovative exploration to discover and develop unknown high-performance molecules. Molecular optimization is challenging as it usually involves reasoning about multiple, often conflicting or correlated, objectives~\citep{fromer2023computer}.  For example, for a new drug to be successful, it must simultaneously be potent, bioavailable, safe, and synthesizable~\citep{dara2022machine}. Generally, these objectives often exhibit implicit relationships, which can be either conflicting or correlated, rather than being independent~\citep{jain2023multi}. For example, molecules that are effective against a target may also have detrimental effects on humans. 

Although several drug discovery models have been proposed to tackle Multi-Objective Molecular Optimization (MOMO), most of them do not make full use of acquired chemical knowledge, such as how the combination of substructures affects chemical properties. More specifically, they merely employ a pretrained chemical property predictor as the discriminator,  filtering high-performance molecules based on the predicted scores \citep{nigam2020augmenting,mars,brown2019guacamol}. Additionally, most studies neglect the conflicting or correlated relationships among chemical properties and simply use a predefined Linear Scalarization function (e.g., mean)  to perform a weighted summation of the losses \citep{vae3,wang2022retrieval,gan2,flow3,liu2021graphebm,flow3,dst,rl2,rl3,jain2023multi,mars}.  However, Linear Scalarization often results in biased solutions and leaves certain areas of objective space unexplored, which has been mathematically analyzed by  \citet{boyd2004convex}.  \citet{gan3} employs multi-objective Genetic Algorithms to tackle MOMO, but it is computationally expensive and requires a large number of Oracle calls. Recently, ~\citet{jain2023multi} and \citet{zhu2024sample} adopt the multi-objective Bayesian Optimization to address MOMO. Nevertheless, both of them suffer from high computational costs and struggle with high-dimensional optimization. 

To design an effective yet sample-efficient model for MOMO, we identify two key challenges:\\
1. \textbf{\textit{How to make full use of acquired chemical knowledge to facilitate molecular optimization?}} 
To tackle this,  we introduce a dual-path graph neural network (GNN)~\citep{kipf2016semi,DBLP:conf/iclr/XuHLJ19,DBLP:conf/iclr/VelickovicCCRLB18,gilmer2017neural,gaoprotein,DBLP:conf/iclr/LiuCZXZT0R24,gao2023hierarchical,li2024zerog,li2025g} to incorporate complicated chemical knowledge and perform molecular property prediction in the \emph{direct prediction path}. Unlike existing methods that regard it as a discriminator, in the \emph{inversion generation path}, we leverage the gradient w.r.t the molecule structure to optimize the molecular graph. The mapping of chemical structure to properties, learned through the direct path, can effectively guide the editing of chemical structure to a base molecule in the molecular optimization process.
\\
~~2. \textbf{\textit{How to dealing with the conflicting or correlated properties in the inversion generation path?}} 
In our dual-path GNN model, the direct path easily learns each property distribution using different classification heads.
However, in the inversion generation process, it is challenging to apply multiple complicated property constraints to one single molecule.  For example, given two conflicting properties, enhancing one property will weaken the other property. To capture all possible trade-offs among conflicting or correlated properties, we introduce the gradient-based Pareto optimization technique, which is designed to balance multiple conflicting or correlated objectives~\citep{zhou2023pareto,ju2022multi,liu2022towards}. 
Instead of directly deploying existing Pareto methods, we adopt the \emph{relaxation} technique to adapt the continuous Pareto optimization to the \emph{discrete} chemical space.
We provide a convergence analysis demonstrating that our approach, with our relaxation, still converges to the Pareto optimal solutions approximately within only a few iterations. This means our method can \textit{effectively} increase the likelihood of successfully generating high-quality molecules with \emph{sample efficiency}. Our key contributions are summarized below:
\begin{itemize}
  \item We propose a novel dual-path InversionGNN for multi-objective molecular optimization, which leverages the acquired property prediction knowledge to facilitate molecular optimization. It is effective and sample-efficient.
  \item To capture trade-offs among conflicting or correlated properties, we adpot relaxation technique that adapts gradient-based Pareto optimization to the discrete chemical space.
  \item We empirically verify that InversionGNN is both effective and sample-efficient in various real-world discrete multi-objective settings including drug discovery.
\end{itemize}

\section{Related Work}
\paragraph{Molecular Optimization.}
Recent years have witnessed the success of applying deep generative models and molecular graph
representation learning in drug discovery. Most of the existing works can be categorized into two classes:  Constrained Generative Model (CGM) and Combinatorial Optimization (CO) algorithm. CGMs model the molecular distribution with deep generative networks such as VAE~\citep{vae1,cgvae1,vae3,jtvae2,vae2,jtvae3,cgvae2,wang2022retrieval}, GAN~\citep{gan1,gan2,gan3}, Flow~\citep{flow3}, Energy~\citep{liu2021graphebm} and Diffusion-based model~\citep{diff1}, projecting input molecules into a latent space.  However, obtaining the ideal smooth and discriminative latent space has proven to be a challenge in practice~\citep{brown2019guacamol,tdc,gao2025towards}. Another research line based on CO directly searches for desired molecules in the explicit discrete space, e.g., Reinforcement Learning~\citep{searching1,rl1,rl2,rl3,searching2,gaodeep,jain2023multi,popova2018deep,jin2020hierarchical}, Evolutionary Algorithms~\citep{evo,nigam2020augmenting,flow2}, Markov Chain Monte Carlo~\citep{mars,mimosa}, Tree Search~\citep{dst} and Bayesian Optimization~\citep{korovina2020chembo,moss2020boss}. CO algorithms require massive numbers of Oracle calls, which is computationally inefficient during the inference time. However, they are still challenged in dealing with conflicting or correlated properties.

\textbf{Gradient-Based Pareto Optimization.}
The Pareto optimal solution is highly valuable for multi-objective optimization since identifying solutions that simultaneously maximize all objectives is often impractical.  In order to efficiently find Pareto optimal solutions, MGDA~\citep{mgda} is proposed to identify Pareto optimal solutions for low-dimensional data.  \citet{mtl_moo} extend MGDA to high-dimensional multi-objective scenarios.  Subsequently, several efficient Pareto optimization methods~\citep{lin2019pareto,nonlinear_scalar,pareto_exp_mtl,mahapatra2020multi} have been proposed to explore the Pareto set, due to the fact that MGDA cannot find Pareto optimal solutions specified by exact objective preference.  However, most efforts of Pareto optimization focus on continuous parameter space, ignoring the complex discrete chemical space. 

\section{Preliminaries} 

\subsection{Pareto Optimality}
In this work, we consider  $\displaystyle m$ tasks described by $\displaystyle f(\vx):=\left [ 
 f_i(\vx)\right ]$, where each $\displaystyle f_i(\vx), i \in [m]$ represents the performance of the $i$-th task to be maximized. Given an desired target $\displaystyle \vy \in \R^m$, we set a non-negative objective function $\displaystyle \mathcal{L}(f(\vx),\vy) = [l_1,\ldots, l_m]^\mathsf{T}$, where $\displaystyle l_i$ for $\displaystyle i \in [m]$ is the objective function of the $i$-th task. Hence, maximizing the performance $\displaystyle f(\vx)$ is equivalent to minimizing the objective function. For any two points $\displaystyle \vx,\vx' \in \R^n$, $\displaystyle \vx$ dominates $\displaystyle \vx'$, denoted by $\displaystyle \mathcal{L}^{\vx'}\succeq \mathcal{L}^{\vx}$, implies $\displaystyle l^{\vx'}_i -l^{\vx}_i\geq0, \forall i\in [m]$.  A point $\displaystyle \vx^*$ is said to be \textbf{Pareto optimal} if $\displaystyle \vx^*$ is not dominated by any other points in $\displaystyle \R^n$. The set of all Pareto optimal solutions is denoted by $\mathcal{P}$. The set of multi-objective values of the Pareto optimal solutions is called \textbf{Pareto front}, denoted by $\mathcal{F}$. In Multi-Objective Optimization, the ideal goal is to identify a set of Pareto solutions that cover all the possible trade-offs among objectives. For a formal definition of Pareto concept, please refer to Appendix \ref{poapp}.
 
\subsection{Differentiable Scaffolding Trees}

A scaffolding tree \citep{vae3}, $\mathcal{T}_{\boldsymbol{x}}$, is a spanning tree whose nodes are substructures. It is a high-level representation of molecular graph $\boldsymbol{x} \in \mathcal{X}$. For a scaffolding tree with $K$ nodes and substructure set $\mathcal{S}$, it is represented by $\mathcal{T}_{\boldsymbol{x}} = \{\mathbf{N},\mathbf{A},\mathbf{w}\}$:  (i) node indicator matrix defined by $\mathbf{N} \in \left\{0,1\right\}^{K \times |S|}$, and each row of $N$ is a one-hot vector, indicating the substructure of the node; (ii) adjacency matrix denoted by $\mathbf{A} \in \{0,1\}^{K\times K}$, where $\mathbf{A}_{ij}=1$ indicates the $i$-th node and the $j$-th node are connected while $0$ indicates unconnected; and (iii) node weight vector $\mathbf{w}=[1,\ldots,1]^\top \in \mathbb{R}^K$, indicates the $K$ nodes are equally weighted. \citet{dst} add a virtual expansion node set $\mathcal{V}_{expand }=\left\{u_v \mid v \in \mathcal{V}_{\mathcal{T}_{\boldsymbol{x}}}\right\},\left|\mathcal{V}_{expand}\right|=K_{ expand }=K$ to the scaffolding tree for structure modification. $\mathcal{T}_{\boldsymbol{x}}$ can be converted to a $K+K_{expand}$ nodes differentiable scaffolding tree $\widetilde{\mathcal{T}}_{\boldsymbol{x}}= \{\widetilde{\mathbf{N}},\widetilde{\mathbf{A}},\widetilde{\mathbf{w}}\}$. The differentiable scaffolding tree facilitates the substructure addition, deletion, and replacement.

\section{Method}
In this study, we explore an InversionGNN framework to address the problem of (1) finding Pareto optimal molecules conditioned on desired weight and (2) finding diverse Pareto optimal molecules with all possible trade-offs. We illustrate the pipeline of InversionGNN in Figure~\ref{fw}:
\begin{itemize}[leftmargin=*]
\item[$\bullet$] \textbf{A Dual-Path Network: InversionGNN.} We introduce a dual-path graph neural network (GNN) to incorporate complicated chemical knowledge in the \emph{direct prediction path} (Section~\ref{sec:oracle}). In the \emph{inversion generation path}, we leverage the gradient w.r.t the molecule structure to guide the molecular optimization process.

\item[$\bullet$] \textbf{Gradient-Based Pareto Inversion.} In order to inverse the complicated multi-property knowledge, we relax the discrete molecule optimization into a locally differentiable Pareto optimization problem. We reorganize the gradients into a non-dominating gradient (Section~\ref{sec:epo}).
\end{itemize} 

\subsection{Problem Formulation}

\begin{definition}[Multi-Objective Molecular Optimization (MOMO)]
Given the Chemical Space $\mathcal{X}$, Oracle function $f(\boldsymbol{x})$,  objective function $\mathcal{L}$, the target property score $\boldsymbol{y} \in \mathbb{R}^m$ of $m$ properties, the goal of MOMO is to find candidate molecules $\boldsymbol{x}^* \in \mathcal{X}$ that minimize all objectives: 
\begin{equation}
\begin{aligned}
\boldsymbol{x}^* = {\arg \min}_{\boldsymbol{x} \in \mathcal{X}}\mathcal{L}(f(\boldsymbol{x}),\boldsymbol{y}).
\end{aligned}
\end{equation}
\end{definition}
When objectives $[l_1,\ldots, l_m]$ are conflicting, there is no single $\boldsymbol{x}^*$ which simultaneously maximizes all objectives. Consequently, multi-objective optimization adopts the concept of \emph{Pareto optimality}, which describes a set of solutions $\mathcal{P}$   that provide optimal trade-offs among the objectives.

\begin{figure}[t]
\centering
\centerline{\includegraphics[width=1.02\columnwidth]{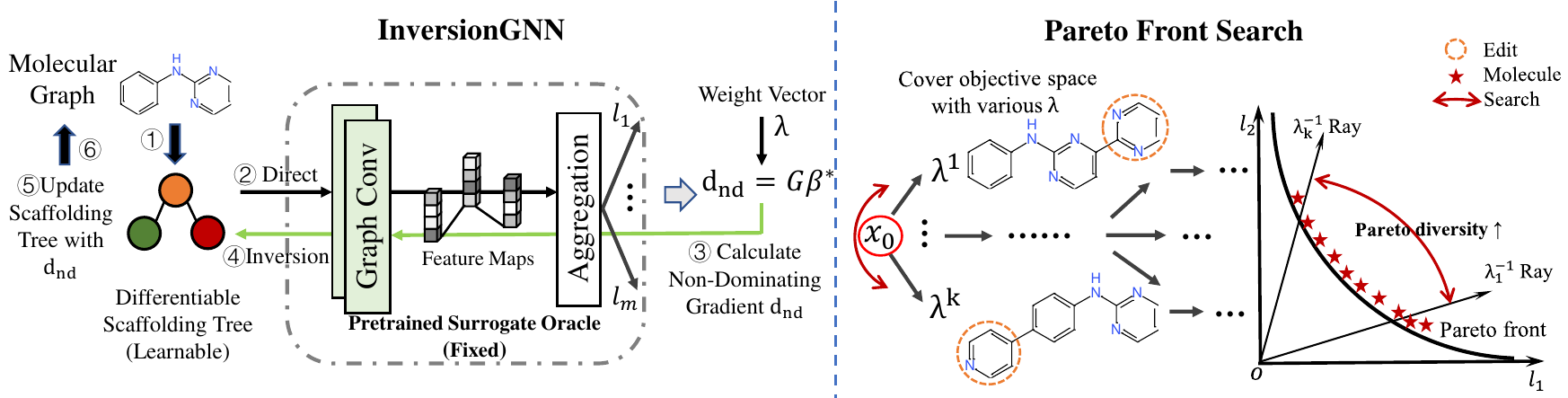}}
\caption{ (1) \textbf{InversionGNN}. A surrogate Oracle GNN is trained to incorporate complicated chemical knowledge.  In the direct prediction path, a molecule $\boldsymbol{x}^t$ is fed to the GNN to obtain the objective function at the $t$-th iteration. In the inversion path, we calculate the non-dominating gradient to find local Pareto-optimal molecules $\mathcal{P}^t$ conditioned on given weight vector $\boldsymbol{\lambda}$.  (2) \textbf{Pareto Front Search}. Exploring the full Pareto front approximately with various weight vectors, improving the Pareto diversity of generated molecules.}
\label{fw}
\vspace{-0.2cm}
\end{figure}

\subsection{A Dual-Path Network: InversionGNN}\label{sec:oracle}

In this section, we introduce a dual-path network, InversionGNN, consists of a \emph{direct prediction path} and a \emph{inversion generation path} for multi-objective molecular optimization.

\paragraph{Direct Prediction Path.} We aim to develop a pretrained GNN $f(\boldsymbol{x}; \theta)$ to capture knowledge from the ground truth Oracle $\mathcal{O}$.  We imitate the Oracle function $\mathcal{O}$ using a multi-head architecture to individually predict $m$ property scores $\widehat{\boldsymbol{y}} \in \mathbb{R}^m$:
\begin{equation}
\widehat{\boldsymbol{y}}=f(\boldsymbol{x}; \theta) \approx  [\mathcal{O}_1(\boldsymbol{x}), \mathcal{O}_2(\boldsymbol{x}), \cdots, \mathcal{O}_m(\boldsymbol{x})]^\mathsf{T} = \boldsymbol{y},
\end{equation} 
where $\theta$ is the learnable parameters.  We adopt the Graph Convolutional Network (GCN)~\citep{kipf2016semi} to extract the representations of molecules. The updating rule for the $l$-th layer is $H^{(l)}=\operatorname{RELU}\left(B^{(l)}+A\left(H^{(l-1)} U^{(l)}\right)\right)$, where $B^{(l)} \in \mathbb{R}^{K \times d} / U^{(l)} \in \mathbb{R}^{d \times d}$ are bias/weight parameters and $A$ is adjacency matrix. We leverage the weighted average as the readout function of the last layer’s node embeddings, followed by multi-head MLP to yield the prediction of $m$ properties $\widehat{\boldsymbol{y}} =\operatorname{MLP}\left(\frac{1}{\sum_{k=1}^K w_k} \sum_{k=1}^K w_k H_k^{(L)}\right)$. We train the model by minimizing the discrepancy between the prediction $\widehat{\boldsymbol{y}}$ and the ground truth $\boldsymbol{y}$:
\begin{equation}\label{train}
\theta^*=\arg \min _{\theta} \mathcal{L}\left(\boldsymbol{y}, \widehat{\boldsymbol{y}}\right),
\end{equation}
where $\mathcal{L}$ is the loss function, e.g.  binary cross entropy. The parameters of the surrogate Oracle model are pretrained at once and \emph{freezed} in the inversion generation path.

\paragraph{Inversion Generation Path.} Instead of using the GNN model as a simple predictor, we decode the stored chemical knowledge in the GNN model and use its gradient w.r.t the input molecule to guide molecular optimization. Let $g_i=\nabla l_i$ represent the gradient of the $i$-th property objective function. Consequently, we obtain $G= \nabla \mathcal{L} = [g_1,\ldots, g_m]$ by back-propagating the derivatives from the target properties. In order to ensure the molecular learnable, InversionGNN requires a differentiable molecular representation in chemical space. Compared to the widely used non-differentiable discrete scaffolding tree $\mathcal{T}_{\boldsymbol{x}}$, the currently only available differentiable representation is the differentiable scaffolding tree proposed by \citet{dst}. Hence, we employ the differentiable scaffolding tree, denoted as $\widetilde{\mathcal{T}}_{\boldsymbol{x}}$, as the representation of molecular $\boldsymbol{x}$ in InversionGNN.

Compared to InversionGNN, vanilla GCN~\citep{kipf2016semi, DBLP:conf/aaai/LiuRGCXTL23} is often used for prediction tasks, and its inference prediction process is the direct prediction path in InversionGNN. InversionGNN has the same structure as vanilla GCN but has a different computation process. In contrast, InversionGNN contains an additional inversion path for generation tasks, which allows the inverse of the gradients to the input molecules.  

\subsection{Gradient-Based Pareto Inversion}\label{sec:epo}
In this section, to capture possible trade-offs among conflicting or correlated properties, we adopt the relaxation technique and reformulate the discrete molecule Pareto optimization into a locally differentiable problem. At the $t$-th iteration, given one molecule $\boldsymbol{x}^t$, we aim to find local Pareto optimal molecules set $\mathcal{P}^t$ from the neighborhood set $\mathcal{N}(\boldsymbol{x}^t)$:
\begin{equation}
\begin{aligned}
  \boldsymbol{x}^{t+1} \in \mathcal{P}^t \subseteq \mathcal{N}(\boldsymbol{x}^t)
\end{aligned}
\end{equation}
where $\mathcal{N}(\boldsymbol{x}^t)$ is the set of all the possible molecules obtained by (1) imposing one local editing operation (expand, remove or replace one substructure) to scaffolding tree 
 and (2) assembling the edited trees into molecules.

\paragraph{Identifying the Non-Dominating Gradient.} To approach the Pareto front, \citet{mgda} demonstrated that the descent direction $d$ can be found within the convex hull of the gradients, i.e., $d \in \mathcal{CH}_{\boldsymbol{x}}:= \{ G \boldsymbol{\beta} \}$, where $\boldsymbol{\beta} \in \mathcal{S}^m$ belongs to the $m$-dimensional simplex. In order to identify the \emph{Non-Dominating Descent Direction} $d_{nd} = G\boldsymbol{\beta}^*$, inspired by continuous Pareto optimization~\citep{mahapatra2020multi}, we solve the following \emph{Quadratic Programming} (QP) problem:
\begin{equation}
\begin{aligned}
 \boldsymbol{\beta}^* = & \underset{\|\boldsymbol{\beta}\|_1 \leqslant 1}{\arg \min }\left\|G^\top G \boldsymbol{\beta}-\mathbf{a}\right\|^2 \\
& \text { s.t. } \boldsymbol{\beta}^\top G^\top g_j \geqslant 0 \quad \forall j \in \mathrm{J}=\left\{\begin{array}{cc}
\mathrm{J}^* & \mathrm{KL}\left(\mathcal{L} \odot \boldsymbol{\lambda} | \boldsymbol{1} \right) \leqslant \epsilon \\
{[m]} & \mathrm{KL}\left(\mathcal{L} \odot \boldsymbol{\lambda} | \boldsymbol{1} \right) > \epsilon
\end{array},\right. \\
& \text { where } \quad \mathrm{J}^*=\left\{j \in[m] \mid j=\arg \max _{j^{\prime} \in[m]} l_{j^{\prime}} \lambda_{j^{\prime}}\right\},
\end{aligned}
\label{QP}
\end{equation}
$\mathbf{a}$ is the anchoring direction~\citep{mahapatra2020multi}, and $\boldsymbol{\lambda}\in \mathbb{R}^m$ is a predefined weight vector that indicates the importance of each property. Then, we calculate the  non-dominating direction $d_{nd}=G\beta^*$ and update the differentiable scaffolding tree with $\widetilde{\mathcal{T}}_{\boldsymbol{x}} =\widetilde{\mathcal{T}}_{\boldsymbol{x}} - \eta d_{nd}$. Therefore, we can yield a solution $\widetilde{\mathcal{T}}_{\boldsymbol{x}^t}^*$ that is not dominated by $\widetilde{\mathcal{T}}_{\boldsymbol{x}^t}$ in its neighborhood set $\mathcal{N}(\widetilde{\mathcal{T}}_{\boldsymbol{x}^t})$.

\paragraph{Molecule Search in Discrete Chemical Space.}  At the $t$-th iteration, we begin with a molecule $\boldsymbol{x}^t$ and convert it to differentiable scaffolding tree $\widetilde{\mathcal{T}}_{\boldsymbol{x}^t}$. Subsequently, we identify the local Pareto optimal solution $\widetilde{\mathcal{T}}_{\boldsymbol{x}^t}^*$ within the neighborhood set $\mathcal{N}(\widetilde{\mathcal{T}}_{\boldsymbol{x}^t})$ by performing $K$ rounds of gradient descent against the non-dominating direction $d_{nd}$. From $\widetilde{\mathcal{T}}_{\boldsymbol{x}^t}^*$, we can sample the discrete scaffolding tree $\mathcal{T}_{\boldsymbol{x}^t}^*$ and assemble it to molecules, denoted as $\boldsymbol{x}^{t+1}$ in the following iteration. Our proposed InversionGNN is summarized in Algorithm~\ref{alg:1}.

\begin{algorithm}[t]
\DontPrintSemicolon
\SetAlgoLined
   \KwIn {Input molecule $\boldsymbol{x}^0 \in \mathcal{X}$, weight vector $\boldsymbol{\lambda} \in \mathbb{R}^m$, and step size $\eta>0$.}
   \KwOut {Generated Molecule $\boldsymbol{x}^T$.}
   Initialization.\;
   Train surrogate Oracle according to Eq. \ref{train}.\;
   \For{$t=1,\ldots,T$}{
   Convert molecule $\boldsymbol{x}^t$ to differentiable scaffolding tree $\widetilde{\mathcal{T}}_{\boldsymbol{x}^t}^1$;\;
   \For{$k=1,\ldots,K$}{
   Compute gradients of target objectives w.r.t. $\widetilde{\mathcal{T}}_{\boldsymbol{x}^t}^k$: $G= \nabla \mathcal{L}  = [g_1,\ldots, g_m]$;\;
   Determine $\boldsymbol{\beta}^*$ by solving QP problem as Eq. \ref{QP};\;
   Calculate non-dominating gradient $d_{nd} = G\boldsymbol{\beta}^*$;\;
   Update the differentiable scaffolding tree: $\widetilde{\mathcal{T}}_{\boldsymbol{x}^t}^{k+1} = \widetilde{\mathcal{T}}_{\boldsymbol{x}^t}^k - \eta d_{nd}$;\;
   }
   Sample discrete $\mathcal{T}_{\boldsymbol{x}^{t+1}}$ from continuous $\widetilde{\mathcal{T}}_{\boldsymbol{x}^t}^{K}$ and assemble it to molecule $\boldsymbol{x}^{t+1}$.\;
   }
   \caption{InversionGNN}
   \label{alg:1}
\end{algorithm}
\vspace{-0.4cm}

\subsection{Convergence Analysis.}  
In this section, we provide the theoretical analysis of InversionGNN, discussing its convergence properties in the discrete chemical space. 

\begin{theorem}[Approximation Guarantee]\label{ca}
Under the assumptions in Sec. \ref{app:th}, given an initial molecule $\boldsymbol{x}^0$ and a weight vector $\boldsymbol{\lambda}$, InversionGNN guarantees the following approximation when performing $T$ optimization rounds:
\begin{equation}
\begin{aligned}
\mathcal{L}^T \in \mathcal{A} :=\left\{\mathcal{L} \in \mathcal{O} \mid \mathcal{L} \preceq (\gamma   \check{\lambda}^* + (1-\gamma  )\check{\lambda}^0) \cdot \boldsymbol{\lambda}^{-1} \right\},
\end{aligned}
\end{equation}
where $\gamma  = \frac{1-\alpha^{T}}{(1-\alpha)N} $, $\boldsymbol{\lambda}^{-1}$ = $(1/\lambda_i,\ldots,1/\lambda_m)$, $\check{\lambda}^*$ and $\check{\lambda}^0$ is the maximum relative objective value $\check{\lambda}^t:= \max \left\{l_j^t \lambda_j \mid j \in[m]\right\}$ of $\boldsymbol{x}^*$ and $\boldsymbol{x}^0$.
\end{theorem}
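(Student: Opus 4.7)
The plan is to bound $\check{\lambda}^T := \max_j l_j^T \lambda_j$ through three ingredients: (i) a per-iteration contraction of $\check{\lambda}^t$ in the continuous relaxation, (ii) a telescoping geometric series over the $T$ outer rounds to recover the factor $\gamma$, and (iii) a sampling-level guarantee that transfers the continuous bound to the discrete molecule actually assembled. Note that the dominance relation $\mathcal{L}^T \preceq (\gamma\check{\lambda}^* + (1-\gamma)\check{\lambda}^0)\cdot \boldsymbol{\lambda}^{-1}$ is equivalent, componentwise, to $\check{\lambda}^T \leq \gamma\check{\lambda}^* + (1-\gamma)\check{\lambda}^0$, so the entire proof reduces to controlling this single scalar.

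First, inside one outer iteration I would work in the relaxation given by $\widetilde{\mathcal{T}}_{\boldsymbol{x}^t}$ and exploit the two branches of the QP in Eq. \ref{QP}. When $\mathrm{KL}(\mathcal{L}\odot\boldsymbol{\lambda}\,|\,\boldsymbol{1}) > \epsilon$, the constraint $\boldsymbol{\beta}^\top G^\top g_j\geq 0$ holds for every $j\in[m]$, so $d_{nd}=G\boldsymbol{\beta}^*$ is a common descent direction for all objectives and drives the relative-loss profile toward $\boldsymbol{\lambda}$; when $\mathrm{KL}\leq \epsilon$, the constraint is restricted to $J^*$, i.e.\ the indices realizing the current $\check{\lambda}$, so the step specifically contracts the maximum relative loss. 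Combining these two regimes via a standard descent lemma applied to the smooth surrogate $f(\boldsymbol{x};\theta)$, and using the EPO-style argument of \citet{mahapatra2020multi} adapted to the GNN Lipschitz constants and the geometry of the convex hull $\mathcal{CH}_{\boldsymbol{x}}$, one obtains a recursion of the form $\check{\lambda}^{t+1}\leq \alpha\,\check{\lambda}^t + (1-\alpha)\check{\lambda}^*$ for some contraction rate $\alpha\in(0,1)$ governed by $\eta$, the smoothness constants, and the diameter of $\mathcal{CH}_{\boldsymbol{x}}$.

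Second, I would unroll the recursion across the $T$ outer rounds. This gives $\check{\lambda}^T\leq \alpha^T\check{\lambda}^0+(1-\alpha)\bigl(\sum_{t=0}^{T-1}\alpha^t\bigr)\check{\lambda}^* = \alpha^T\check{\lambda}^0+(1-\alpha^T)\check{\lambda}^*$. The denominator $N$ appearing in the theorem's $\gamma=(1-\alpha^T)/((1-\alpha)N)$ I read as a normalization introduced in Sec. \ref{app:th}, accounting either for the $K$ inner QP rounds per outer step or for an averaging over candidate molecules; rescaling the geometric sum by this factor yields exactly $\check{\lambda}^T\leq \gamma\check{\lambda}^*+(1-\gamma)\check{\lambda}^0$. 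Dividing each coordinate by $\lambda_j$ then recovers the stated dominance condition that defines $\mathcal{A}$.

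The main obstacle is the third step, namely transferring the continuous guarantee to the discrete space. Because the neighborhood $\mathcal{N}(\boldsymbol{x}^t)$ is obtained by a single local edit (expand, remove, or replace one substructure) followed by assembly, it need not contain a molecule realizing an arbitrary convex-combination update $G\boldsymbol{\beta}^*$, so the Pareto-descent property of $d_{nd}$ may be degraded after rounding. Making this step rigorous requires (a) a Lipschitz-type continuity of $f(\boldsymbol{x};\theta)$ on the scaffolding-tree relaxation, and (b) a coverage assumption ensuring that the continuous optimum of $\widetilde{\mathcal{T}}^K_{\boldsymbol{x}^t}$ is approximated to within a bounded gap by some element of $\mathcal{N}(\boldsymbol{x}^t)$. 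Both conditions must be pinned down precisely in Sec. \ref{app:th}; the real technical work is then to show that the resulting discretization error can be absorbed into the effective contraction rate $\alpha$ without degrading the linear rate to a sublinear one, so that the geometric-series telescoping in the second step still goes through and produces the stated $\gamma$.
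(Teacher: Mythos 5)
Your proposal takes a genuinely different route from the paper, and the route as described does not reach the stated bound. The paper's proof is not a contraction argument toward $\check{\lambda}^*$; it is a submodular-greedy approximation argument in the style of greedy maximization guarantees. Concretely, writing $r^t$ for $\check{\lambda}^t$, the paper considers the $k$-step path $\boldsymbol{x}^0\rightarrow\cdots\rightarrow\boldsymbol{x}^k=\boldsymbol{x}^*$ along which one substructure is added per step, and uses the submodularity half of Assumption \ref{as1} to get $k(r^0-r^1)\geq\sum_{j=1}^k(r^{j-1}-r^j)=r^0-r^*$, hence $r^0-r^1\geq\frac{1}{N}(r^0-r^*)$ by the molecule-size bound $k\leq N$ of Assumption \ref{as2}. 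This is exactly where the factor $N$ in $\gamma$ comes from; it is not, as you guessed, a normalization over the $K$ inner QP rounds or over candidate molecules, and your "rescaling the geometric sum by this factor" step has no justification. The curvature half of Assumption \ref{as1} then lower-bounds the successive gains of the algorithm's own trajectory, $\hat{r}^{j-1}-\hat{r}^j\geq\alpha^{j-1}(\hat{r}^0-\hat{r}^1)$, so summing gives $\hat{r}^0-\hat{r}^T\geq\frac{1-\alpha^T}{1-\alpha}(\hat{r}^0-\hat{r}^1)$; combined with the greedy-optimality of the first step ($\hat{r}^1\leq r^1$, $\hat{r}^0=r^0$) this yields $r^0-\hat{r}^T\geq\gamma(r^0-r^*)$ and hence the theorem. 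Your recursion $\check{\lambda}^{t+1}\leq\alpha\check{\lambda}^t+(1-\alpha)\check{\lambda}^*$ unrolls to a coefficient $1-\alpha^T$ on $\check{\lambda}^*$, which is not $\gamma=\frac{1-\alpha^T}{(1-\alpha)N}$, and no legitimate manipulation turns one into the other while simultaneously changing the $\check{\lambda}^0$ coefficient from $\alpha^T$ to $1-\gamma$. The missing idea is the comparison of the greedy first step against the entire $k$-step optimal path via diminishing returns.

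Two further mismatches are worth noting. First, your first ingredient (descent lemma on the smooth surrogate, EPO geometry, Lipschitz constants of the GNN) plays no role in the paper's proof of this theorem; the QP and the EPO machinery enter only through Lemma \ref{lemmaBS}, which is invoked to assert that each iterate lands in the previous admissible set and that the neighbor minimizing $\check{r}^{t+1}$ is selected exactly. Second, the discrete-to-continuous transfer that you correctly identify as the main technical obstacle is not resolved by the paper via Lipschitz or coverage arguments: it is absorbed into the assumption that the greedy step over $\mathcal{N}(\boldsymbol{x}^t)$ is exactly solved. Your concern there is a fair critique of the paper's rigor, but pursuing it does not substitute for the submodularity argument that actually produces $\gamma$.
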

\paragraph{Remark.} This theorem tells that InversionGNN can generate desired Pareto optimal molecules approximately within a few steps, and it is thus sample-efficient.  Moreover, it implies that the Pareto optimal molecules is conditioned on the given weight vector, enabling chemical experts to design molecules that meet specific practical drug design requirements. Consequently, we can obtain an approximate set of Pareto optimal molecules that encompasses all possible trade-offs with various weight vectors.
For more details about the proof, please refer to Appendix \ref{app:th}.

 \paragraph{Time Complexity.} We did computational analysis in terms of inversion calls and computational complexity per iteration. (1) \textbf{Inversion Calls}.  InversionGNN requires $O(TM)$ Oracle calls, where $T$ is the number of iterations. $M$ is the number of generated molecules, we have $M \leq NJ$, $N$ is the number of nodes in the scaffolding tree, for small molecule, N is very small. $J$ is the number of enumerated candidates in each node.  (2) \textbf{Computational Complexity per Iteration.} The computation of $G^T G$ has runtime $O(m^2 n)$, where $n$ is the dimension  of the gradients. With the current best QP solver~\citep{zhang2021wide}, we have a runtime of $O(m^3)$. Thus, the per-iteration time complexity is $O(m^2n + m^3)$. Since in deep networks, usually $n \gg  m$, InversionGNN does not significantly increase the computational cost in computing non-dominating gradient.  The comparison of computational complexity between different methods is included in Appendix \ref{app_computational_complexity}.

\vspace{-0.2cm}
\section{Experiments}\label{expsec}
\vspace{-0.2cm}
In this section, we present our empirical findings which aim to answer the following questions:

\textbf{Q1}: \emph{Can InversionGNN identify the Pareto optimal solution conditioned on desired weight?}

\textbf{Q2}: \emph{Can InversionGNN explore the full Pareto front approximately?}

\subsection{Synthetic Task}\label{sec:toy}

\begin{figure}[t]
\centering
% \vspace{-0.2cm}
\subfigure[InversionGNN]{\includegraphics[width=0.24\textwidth]{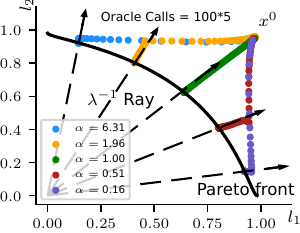}\label{exp:a}}
\subfigure[I-LS]{\includegraphics[width=0.24\textwidth]{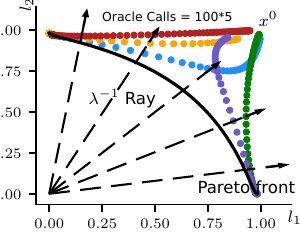}\label{exp:b}}
\subfigure[GAN-GA]{\includegraphics[width=0.24\textwidth]{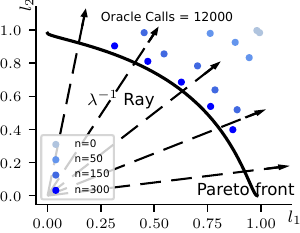}\label{exp:c}}
\subfigure[HN-GFN]{\includegraphics[width=0.24\textwidth]{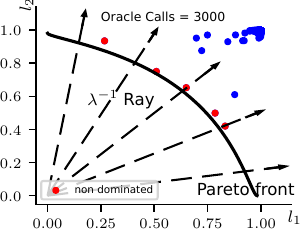}\label{exp:d}}
%\vspace{-0.1cm}
\caption{Pareto front (black solid curve) for two loss functions $l_1$,$l_2$ and solutions (circles) and Oracle calls (computational cost) for different weights $\alpha = \frac{\lambda_1}{\lambda_2}$ (dashed rays). The weight $\lambda$ conditioned Pareto optimal solution is the intersection points between the Pareto front and $\lambda^{-1}$ rays.}
%\vspace{-0.3cm}
\label{exp:1}
\end{figure}

This section evaluates the InversionGNN via a commonly used synthetic objective in multi-objective optimization from Pareto MTL \citep{lin2019pareto}. \emph{Different from the previous works in continuous space, we optimize the problem in discrete space.} We aim to minimize two non-convex objective functions, denoted as:
\begin{equation}
\begin{aligned}
\displaystyle l_1(\vx)=1-e^{-\left\|\vx-\frac{1}{\sqrt{n}}\right\|_2^2}, \ \
 l_2(\vx)=1-e^{-\left\|\vx+\frac{1}{\sqrt{n}}\right\|_2^2},
\end{aligned}
\end{equation}
where $\displaystyle \vx$ represents a point in discrete Euclidean space, with its dimension set to $n=20$. For these two objective functions, we are able to obtain the ground truth of the Pareto front.

\textbf{Metrics.} We use the standard metrics in multi-objective optimization: \textbf{Hypervolume (HV)}~\citep{zitzler1999multiobjective} measures the volume in the objective space spanned by a set of non-dominated solutions and also represents Pareto diversity. We set the reference point as $(1,1)$ in this task.

\begin{wraptable}{r}{0.4\linewidth}
% \vspace{-0.2cm}
\caption{Hypervolume in Synthetic Task.}
\centering
\resizebox{0.8\linewidth}{!}{
\begin{tabular}{cc}
\toprule
Method  & HV $(\uparrow)$ \\
\midrule
I-LS &$0.071_{\pm  0.003}$    \\
GAN-GA &$\underline{0.202}_{\pm  0.017}$    \\ 
HN-GFN  &$0.187_{\pm  0.022}$   \\
\midrule
InversionGNN & $\textbf{0.328}_{\pm 0.001}$ \\ 
\bottomrule
\end{tabular}
}
\label{exp1t}
% \vspace{-0.3cm}
\end{wraptable}
\textbf{Pareto Optimization Conditioned on Desired Weight(Q1). } To address this question,  we adopt 5 weight vectors ($\lambda^{-1} Ray$). \emph{The goal is to find the intersection points between the Pareto front and the weight $\lambda^{-1}$ ray.} We provide the details for generating weight vectors in Appendix \ref{WV}.
For fair comparison, we incorporate Linear Scalarization into our framework as a baseline and term it as \textbf{I-LS}, refer to Appendix \ref{appls}. Due to the incompatibility of Genetic Algorithms and Bayesian Optimization with our framework, we adopted optimization techniques from two state-of-the-art drug discovery approaches, GAN-GA~\citep{gan3} and MOGFN-AL~\citep{jain2023multi}, as baselines. InversionGNN and I-LS are optimized from random initialization for each weight vector. 
For  GAN-GA, we evolve 300 iterations with population size of 40 and 10 offsprings. For MOGFN-AL, we start with 1000 points and running 100 optimization loops. We show the whole optimization process and Oracle calls in Figure~\ref{exp:1}. It illustrates that our InversionGNN framework not only captures the trade-offs among objective based on given weight vectors but also achieves the highest efficiency with 500 Oracle calls. In contrast, other baselines result in biased solutions. 

\begin{wrapfigure}{r}{0.6\linewidth}
\begin{center}
%\vspace{-0.5cm}
\subfigure[InversionGNN]{\includegraphics[width=0.29\textwidth]{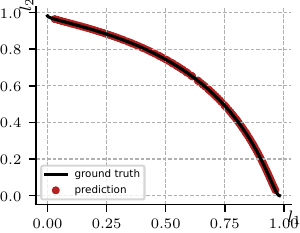}}
\subfigure[I-LS]{\includegraphics[width=0.29\textwidth]{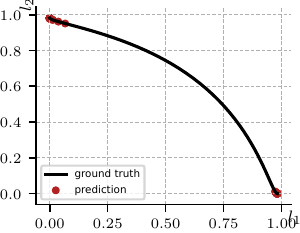}}
\end{center}
\vspace{-0.3cm}
\caption{Pareto Front Search. }
\vspace{-0.35cm}
\label{exp1weight}
\end{wrapfigure}
\textbf{Exploring Full Pareto Front (Q2).} To address this question, we utilized 50 weight vectors to scan the entire Pareto front for InversionGNN and I-LS.  As shown in Figure \ref{exp1weight}, our proposed InversionGNN can explore almost the full Pareto front. In contrast, the solutions of I-LS tend to cluster at the ends of the Pareto front, leaving certain regions unexplored. The results of GAN-GA and MOGFN-AL are shown in Figure~\ref{exp:1}, they merely find a small subset of the Pareto optimal solution. We also report HV and Oracle calls in Table~\ref{exp1t}, which shows that InversionGNN is able to cover the full Pareto front compared with baseline methods. GAN-GA and MOGFN-AL cost 12000 and 3000 Oracle calls respectively, but still achieve suboptimal performance.

\subsection{Multi-objective Molecular Optimization}
In this section, we address the two questions by evaluating InversionGNN in discrete chemical space. 

\textbf{Dataset.} We train the model on ZINC 250K dataset~\citep{zinc}, which consists of 250K drug-like molecules extracted from the ZINC database. We select the substructures that appear more than 1000 times as the vocabulary set $S$, which consists of 82  frequent substructures.

\begin{table}[t]
\vspace{-0.4cm}
\caption{Weight Conditioned Molecular Optimization.}
\centering
\resizebox{0.65\linewidth}{!}{
\begin{tabular}{lcccc}
\toprule
Method  & Nov$(\uparrow)$       &     Div$(\uparrow)$      & APS$(\uparrow)$ &  NU$(\downarrow)$ \\
\midrule
MOEA/D &\textbf{100\%} &n/a & $0.279_{\pm 0.018}$ & $0.088_{\pm 0.010}$ \\
NSGA-III &\textbf{100\%} &n/a &$0.351_{\pm 0.024}$ &  $0.102_{\pm 0.017}$\\
MOGFN-PC &\textbf{100\%} & $0.507_{\pm0.024}$ &$0.393_{\pm 0.036}$ &  $0.088_{\pm 0.014}$ \\ 
HN-GFN &\textbf{100\%} &$\textbf{0.571}_{\pm 0.032}$ &$0.418_{\pm 0.022}$ &  $0.072_{\pm 0.015}$\\ 
\midrule
I-LS  &\textbf{100\%}     &$0.541_{\pm 0.007}$    & $0.529_{\pm 0.006}$ 	    &  $0.049_{\pm 0.002}$                   \\
InversionGNN & \textbf{100\%}  &$0.435_{\pm 0.009}$   &  $\textbf{0.648}_{\pm 0.012}$  &   $\textbf{0.026}_{\pm 0.001}$   \\ 
\bottomrule
\end{tabular}
}
\vspace{-0.2cm}
\label{expt1}
\end{table}

\begin{table*}[t]
\caption{Multi-Objective Drug Discovery. }
\vspace{-0.2cm}
\begin{center}
\resizebox{1\linewidth}{!}{
\begin{tabular}{l|cccc|cccc}
\hline
\multirow{2}{*}{Method} & \multicolumn{4}{c|}{GSK3$\beta$ + JNK3}  & \multicolumn{4}{c}{GSK3$\beta$+JNK3+QED+SA}                      \\ 
\cline{2-9} 
& Nov$(\uparrow)$       &     Div$(\uparrow)$      & APS$(\uparrow)$  & Oracle$(\downarrow)$       & Nov$(\uparrow)$       &     Div$(\uparrow)$      & APS$(\uparrow)$    & Oracle$(\downarrow)$      \\ 
\hline 
LigGPT&   \textbf{100\%} &     \textbf{0.845}     &     0.271  & 100K+0   &    \textbf{100\%}  &  \textbf{0.902} & 0.378   & 100K+0  \\ 
GCPN&      \textbf{100\%} &    0.578     &    0.293    & 0+200K  &    \textbf{100\%}  &   0.596 &  0.450   & 0+200K \\ 
MolDQN&    \textbf{100\%}  &  0.605        &     0.348  & 0+200K &     \textbf{100\%}  & 0.597  &   0.365 & 0+200K \\
GA+D&      \textbf{100\%}  &   0.657&    0.608     & 0+50K   &        97\% &   0.681  &  0.632 & 0+50K \\
RationaleRL& \textbf{100\%} &   0.700 &  0.795&   25K+67K  & \underline{99\%}& 0.720  &  0.675 & 25K+67K \\
MARS&  \textbf{100\%}  &    0.711&   0.789&  0+50K& \textbf{100\%} &  0.714&  0.662 & 0+50K\\
ChemBO&  98\%&   0.702&   0.747   & 0+50K &   \underline{99\%}&   0.701&  0.648 & 0+50K\\
BOSS&  \underline{99\%}&   0.564&   0.504  & 0+50K &   98\%&  0.561&  0.504 & 0+50K\\
LSTM &  \textbf{100\%} &  0.712&  0.680 & 0+50K &  \textbf{100\%} &   0.706&  0.672 & 0+50K\\
Graph-GA & \textbf{100\%} &    0.634&  0.825& 0+25K &   \textbf{100\%} &  0.723&   0.714 & 0+25K\\
DST & \textbf{100\%} &  0.750&  \underline{0.827} & 10K+5K &   \textbf{100\%}&   0.755&   \underline{0.752} & 20K+5K\\
MOGFN-AL& \textbf{100\%} &  0.673& 0.742 &50K+20K&     \textbf{100\%}&   0.711&   0.621 & 50K+20K \\
RetMol& \textbf{100\%} & 0.688& 0.769 &50K+5K&     \textbf{100\%}&   0.691&   0.642 & 50K+5K   \\
HN-GFN & \textbf{100\%} & \underline{0.784}& 0.725 &50K+20K&   \textbf{100\%}&   0.733&   0.638 & 50K+20K \\
\hline
I-LS & \textbf{100\%} & 0.693&  0.823 & 10K+5K &\textbf{100\%}& 0.704& 0.734 & 20K+5K\\
InversionGNN & \textbf{100\%} & 0.768&  \textbf{0.841} & 10K+5K &\textbf{100\%}& \underline{0.769}& \textbf{0.773} & 20K+5K\\
\hline
\end{tabular}
}
\end{center}
\label{expt2}
\vspace{-0.3cm}
\end{table*}

\textbf{Implementation Details.}
 We implemented InversionGNN using Pytorch~\citep{paszke2019pytorch}. Both the size of substructure embedding and hidden size of GCN are $d=100$. 
The depth of GNN $L$ is $3$.  In each generation, we keep $C=10$ molecules for the next iteration.  The learning rate is $1e-3$ in training and inference procedure. We set the iteration $T$ to a large enough number and tracked the result. When Oracle calls budget is used up, we stop it. All results in the tables are from experiments up to $T=50$ iterations.

\textbf{Properties and Oracles.}
(1) \textbf{QED} ranging in $[0,1]$ that provides a quantitative assessment of a molecule's drug-likeness. (2) \textbf{SA} evaluates the ease of synthesizing a molecule, and is normalized to $[0,1]$~\citep{gao2020synthesizability}. (3) \textbf{JNK3} is a member of the mitogen-activated protein kinase family, with scores ranging in $[0,1]$. (4) \textbf{GSK3$\beta$}  is an enzyme encoded by the GSK3$ \beta $ gene in humans, and also has a range of $[0,1]$.  
We utilize the RDKit package to evaluate QED and SA and evaluate  GSK3$\beta$ and JNK3 following \citet{li2018multi} and \citet{rl3}.

\textbf{Metrics.} We use standard metrics in molecular optimization. (1) \textbf{Novelty (Nov)} represents the proportion of generated molecules not in the training set. (2) \textbf{Top-K Diversity (Div)}~\citep{bengio2021flow,dst} of generated molecules is defined as the average pairwise Tanimoto distance between the Morgan fingerprints. (3) \textbf{Top-K Average Property Score (APS)}~\citep{bengio2021flow,dst} refers to the average score of the top-100 molecules. (4) \textbf{Oracle Calls} is represented as ``$A+B$'' which means allocate $A$ Oracle call budget for pretraining and $B$ for optimization.  

\begin{wrapfigure}{r}{0.55\linewidth}
\begin{center}
\vspace{-0.6cm}
\includegraphics[width=0.55\textwidth]{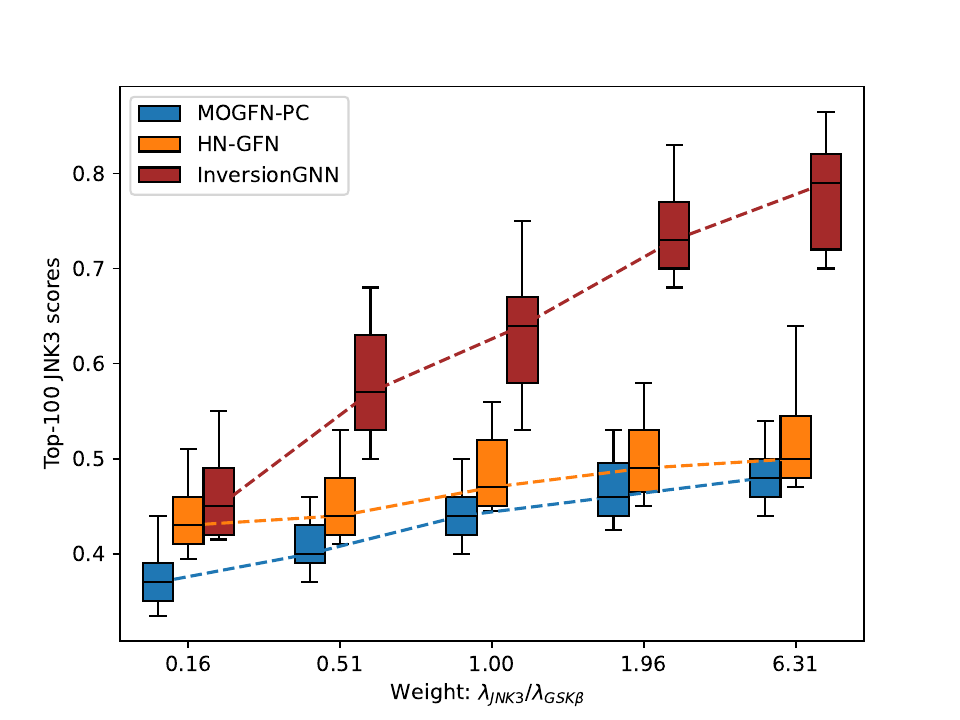}
\end{center}
\vspace{-0.3cm}
\caption{The distribution of Top-$100$ JNK3 scores. }
\vspace{-0.35cm}
\label{HNGFNEXP}
\end{wrapfigure}
\textbf{Molecular Optimization Conditioned on Desired Weight (Q1).} In this task, our goal is to evaluate that if InversionGNN can generate molecules conditioned on desired weight. Here we select two properties, GSK3$ \beta $ and JNK3 and the 5 different weight vectors to serve as independent trials.  
\textbf{MOEA/D}~\citep{zhang2007moea} and \textbf{NSGA-III}~\citep{deb2013evolutionary} are two multi-objective Genetic Algorithms that also incorporate weight.  We perform Genetic Algorithms over the latent space learned by JTVAE~\citep{vae3}. In addition, we also report the \textbf{Non-Uniformity (NU)} that evaluates the distance between properties and weights vector. 
For MOEA/D and NSGA-III, we report the performance of the molecule with the lowest Non-Uniformity.  \textbf{MOGFN-PC}~\citep{jain2023multi} is a multi-objective molecular optimization method that scalarizes reward functions. \textbf{HN-GFN}~\citep{zhu2024sample} is a multi-objective drug discovery method based on Bayesian optimization.
For HN-GFN, I-LS, MOGFN-PC, and InversionGNN, we calculate the performance of the top-$20$ molecules in terms of Non-Uniformity per weight. For each weight, we compute the results separately and report the average results across all 5 trials. The results are shown in Table~\ref{expt1}. InversionGNN outperforms most baselines by a significant margin. Our proposed InversionGNN achieves better uniformity and higher APS. We find that the diversity of InversionGNN is lower than that of HN-GFN and MOGFN-AL. A reasonable explanation is that the molecules produced by InversionGNN concentrate more around the desired weight with lower diversity. It is attributed to InversionGNN's ability to identify solutions that are more specifically related to the weight vector. We follow HN-GFN \citep{zhu2024sample} and visualize the top-$100$ JNK3 scores of the molecules generated by weight-based methods (InversionGNN, HN-GFN, and MOGFN-PC) conditioned on the 5 weight vectors in Figure \ref{HNGFNEXP}. Even though all methods increase as the weight vector increases, InversionGNN outperforms the existing methods by a large margin.

\begin{wraptable}{r}{0.65\linewidth}
% \vspace{-0.2cm}
\caption{Hypervolume in Multi-Objective Drug Discovery.}
\centering
\resizebox{0.8\linewidth}{!}{
\begin{tabular}{ccc}
\toprule
\multirow{2}{*}{Method} & \multicolumn{2}{c}{HV $(\uparrow)$}  \\
  & \multicolumn{1}{c}{GSK3$\beta$ + JNK3}  & \multicolumn{1}{c}{GSK3$\beta$+JNK3+QED+SA}\\
\midrule
MOGFN-AL &$0.567_{\pm  0.057}$  & $\underline{0.377}_{\pm 0.046}$ \\ 
HN-GFN  &$\underline{0.592}_{\pm  0.042}$   & $0.374_{\pm 0.039}$ \\
DST  &$0.497_{\pm  0.019}$   & $0.353_{\pm 0.024}$ \\
\midrule
I-LS & $0.475_{\pm  0.028}$  & $0.308_{\pm 0.022}$ \\
InversionGNN & $\textbf{0.763}_{\pm 0.031}$ & $\textbf{0.519}_{\pm 0.038}$ \\ 
\bottomrule
\end{tabular}}
\label{HVMO}
\end{wraptable}
\textbf{Multi-Objective Drug Discovery (Q2).} In this task, we use a set of weights to scan the Pareto front and evaluate the effectiveness of InversionGNN in synthesizing diverse molecules. We compare our InversionGNN with following baselines: (1) \textbf{LigGPT} (string-based distribution learning model)~\citep{bagal2021liggpt}; (2) \textbf{GCPN} (Graph Convolutional Policy Network)~\citep{searching1}; (3) \textbf{MolDQN} (Molecule Deep Q-Network)~\citep{rl2}; (4) \textbf{GA+D} (Genetic Algorithm with Discriminator network) ~\citep{nigam2020augmenting}; (5) \textbf{MARS} (Markov Molecular Sampling)~\citep{mars}; (6) \textbf{RationaleRL}~\citep{rl3}; (7) \textbf{ChemBO} (Chemical Bayesian Optimization)~\citep{korovina2020chembo}; (8) \textbf{BOSS} (Bayesian Optimization over String Space)~\citep{moss2020boss}; (9) \textbf{LSTM} (Long short term memory)~\citep{brown2019guacamol}; (10) \textbf{Graph-GA} (graph level genetic algorithm)~\citep{brown2019guacamol}; (11) \textbf{DST} (Differential Scaffolding Tree)~\citep{dst};(12) \textbf{MOGFN-AL} (weight-conditional GFlowNets)~\citep{jain2023multi};   (13) \textbf{RetMol} (Retrieval-Based Generation)~\citep{wang2022retrieval}; (14) \textbf{HN-GFN}~\citep{zhu2024sample}. For I-LS and InversionGNN, we collect all the solutions of all weights and report the final results. The results are detailed in Table~\ref{expt2}.  InversionGNN exhibits superior performance compared to the majority of baselines. Diversity and APS is a common trade-off. Some methods encounter difficulties in simultaneously achieving high diversity scores and APS, due to their limited capacity to explore the chemical space. Despite LigGPT's achievement of high diversity, the notably low APS indicates its inability to effectively handle this trade-off. In contrast, InversionGNN shows superior performance on both metrics.  In addition, we follow the recent multi-objective method HN-GFN \citep{zhu2024sample} and report the \textbf{Hypervolume (HV)} of the molecules generated by recent advanced MOMO approaches in Table \ref{HVMO}, including HN-GFN, MOGFN-AL and DST. Higher HV indicates broader coverage of the objective space and higher Pareto diversity.  The results show InversionGNN's ability to capture trade-offs among different properties.

\begin{figure}[t]
\begin{center}
\subfigure[Objective Space]{\includegraphics[width=0.42\columnwidth]{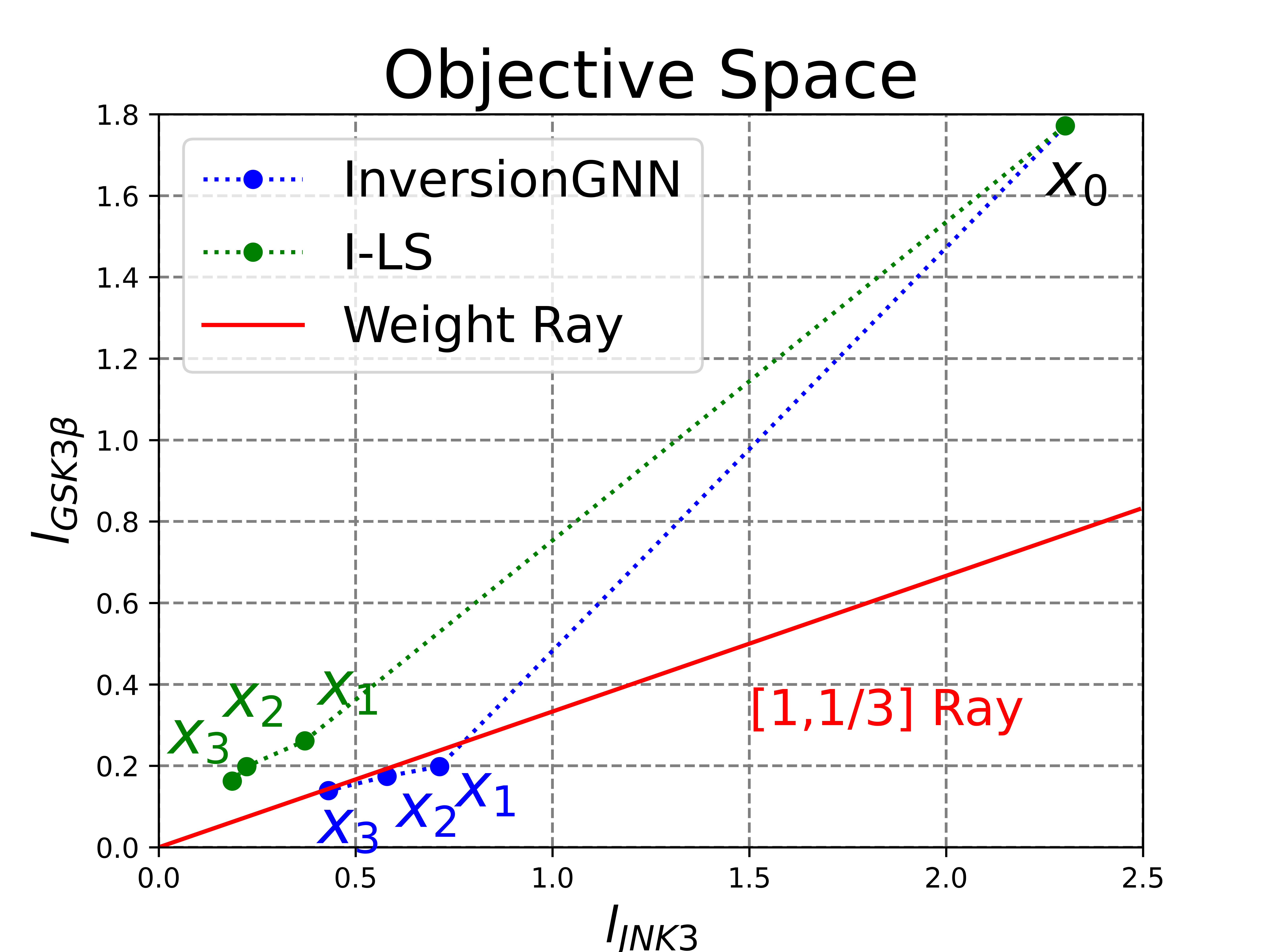}\label{abb:a}}
\subfigure[Molecular Graphs]{\includegraphics[width=0.57\columnwidth]{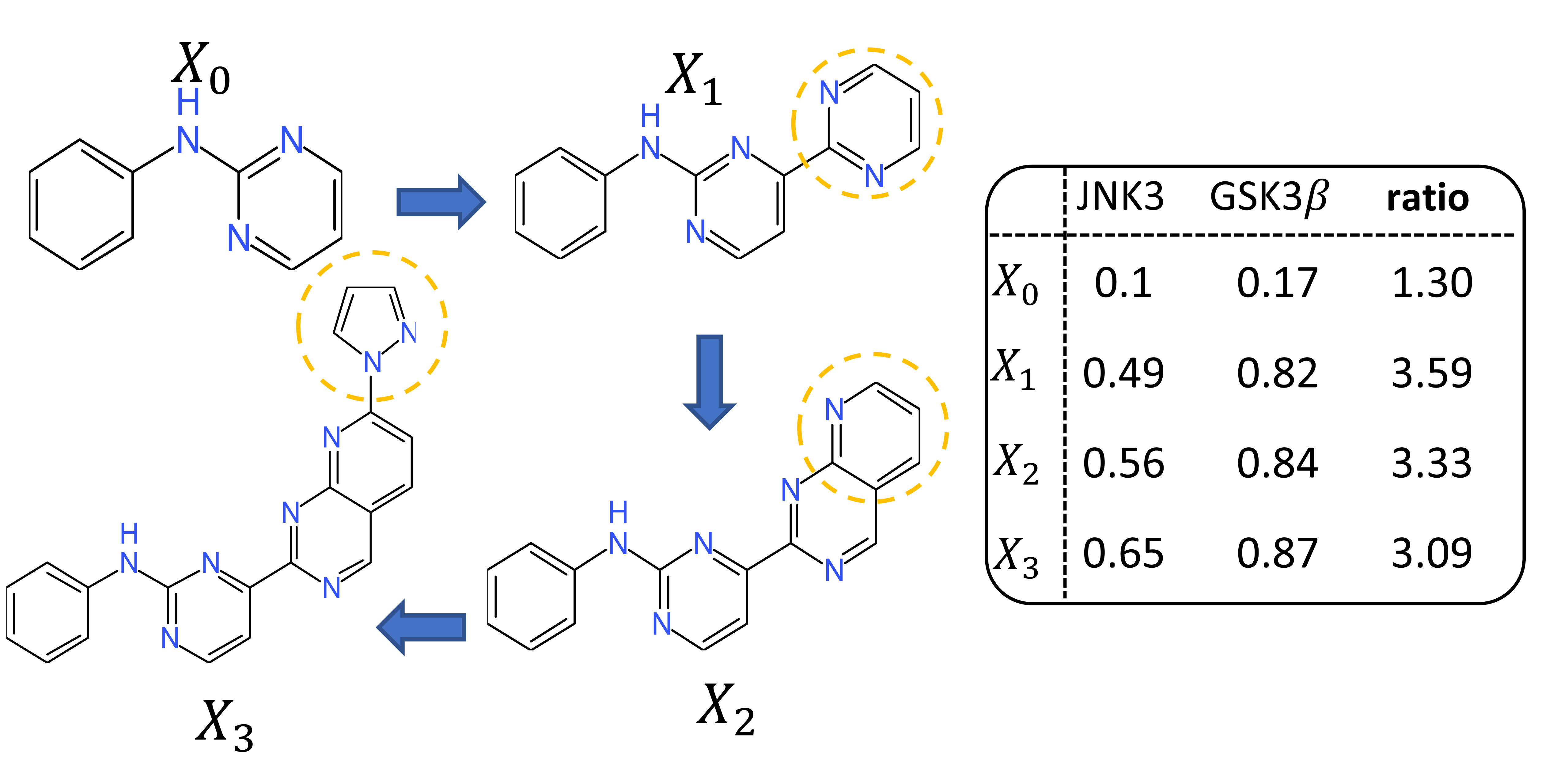}\label{abb:b}}
\end{center}
% \vspace{-0.3cm}
\caption{Optimization process of InversionGNN on JNK3 and GSK3$\beta$ with the weight vector $[1,3]$. (a) As substructures are added, the property scores obtained by InversionGNN increase and become more aligned with the weight vector. (b) Visualization of corresponding molecular graph.}
% \vspace{-0.5cm}
\label{abb}
\end{figure}
\subsection{Ablation Study}
\textbf{Optimization Process.} We demonstrate the optimization process of InversionGNN in Figure~\ref{abb}. We begin with an initial molecule $\boldsymbol{x}_0$ and a weight vector $[1,3]$. For clarity, we focus on two properties, JNK3 and GSK3$\beta$. At each step, we greedily add one substructure and display the corresponding molecular graphs, property scores, and the loss ratio: $ratio = \frac{l_{JNK3}}{l_{GSK3 \beta}}$. As substructures are added, the property scores obtained by InversionGNN gradually increase and become more aligned with the weight vector. However, the I-LS significantly deviates from the weight vector. It demonstrates that InversionGNN is capable of finding Pareto optimal molecules conditioned on weight vector.

\textbf{Search Efficiency.} To understand the search efficiency of InversionGNN, we search the Pareto front with the weight vectors number of $(2, 5, 10, 15, 20)$. For GNK3$\beta$+JNK3, we allocate a $10K$ Oracle call budget for surrogate Oracle pretraining, and $N_{weight}\times1K$ Oracle call budget for optimization. For the optimization involving GNK3$ \beta $+JNK3+QED+SA, the pretraining budget is fixed at $20K$, and $N_{weight}\times1K$ Oracle call budget for optimization. As illustrated in Figure~\ref{crfig} (c), APS increases with growing weight vectors in multi-objective molecular optimization task.

\textbf{Conflicting and Correlated Objectives.} For fair comparison, we follow the synthetic sequence design task in MOGFN \citep{jain2023multi}. The task consists of generating
strings with the objectives given by occurrences of a set of $d$ n-grams. We consider a vocabulary of size 4, with 3 characters  [`C', `V',` A'] and a special token to indicate the end of the sequence. The objectives are defined by the number of occurrences of a given set of n-grams in the sequence. Therefore, for conflicting objectives setting, we use 3 the Unigrams task  [`C',` V',` A']. InversionGNN adequately models the trade-off between conflicting objectives as illustrated by the generated Pareto front in Figure~\ref{crfig} (a). For the 3 Bigrams task with correlated objectives [`CV',` VA', `AC'], Figure~\ref{crfig} (b) demonstrates InversionGNN can simultaneously maximize multiple correlated objectives.

\begin{figure*}[t]
\centering  
\begin{minipage}{0.32\textwidth}
\centering
\includegraphics[width=\textwidth]{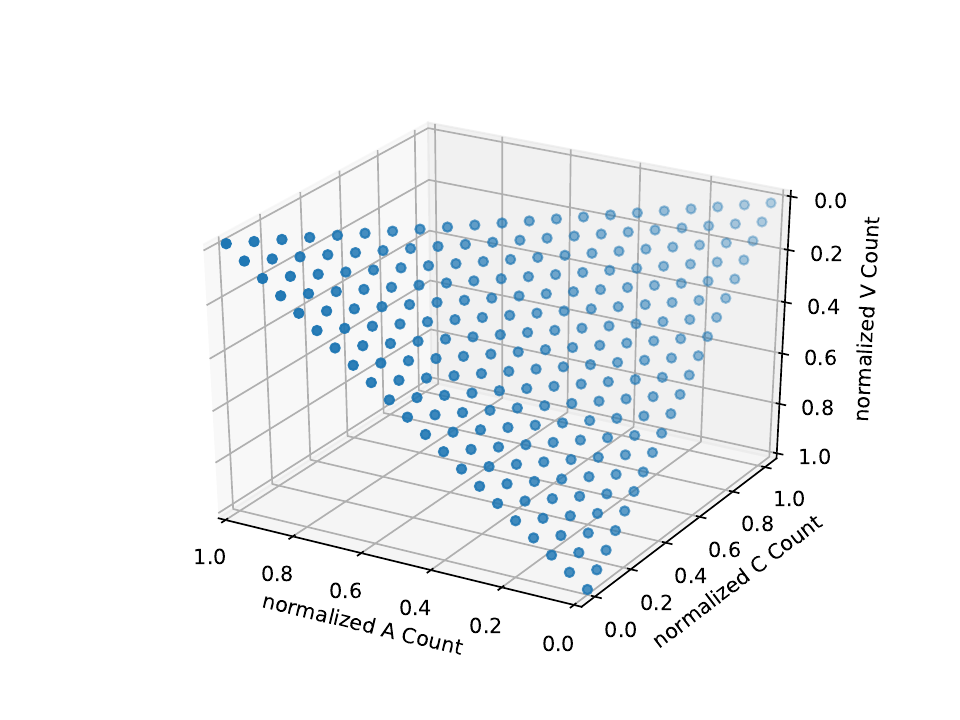}
\captionof{subfigure}{Conflicting Objectives}
\label{pf:a}
\end{minipage}
\hfill
\begin{minipage}{0.32\textwidth}
\centering
\includegraphics[width=\textwidth]{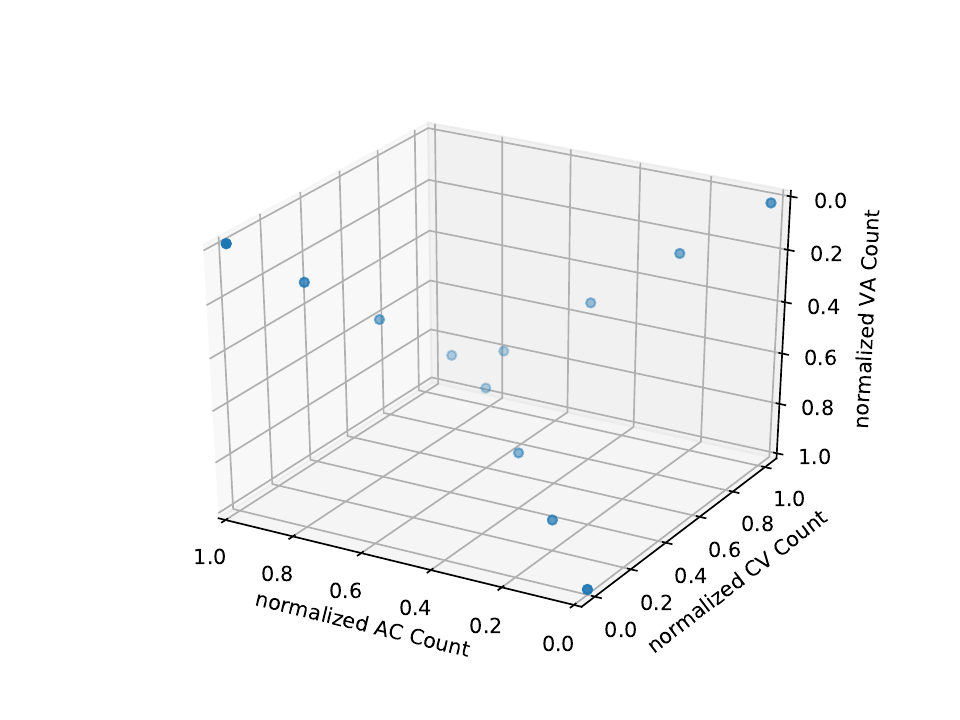}
\captionof{subfigure}{Correlated Objectives}
\label{pf:b}
\end{minipage}
\hfill
\begin{minipage}{0.32\textwidth}
\centering
\includegraphics[width=\textwidth]{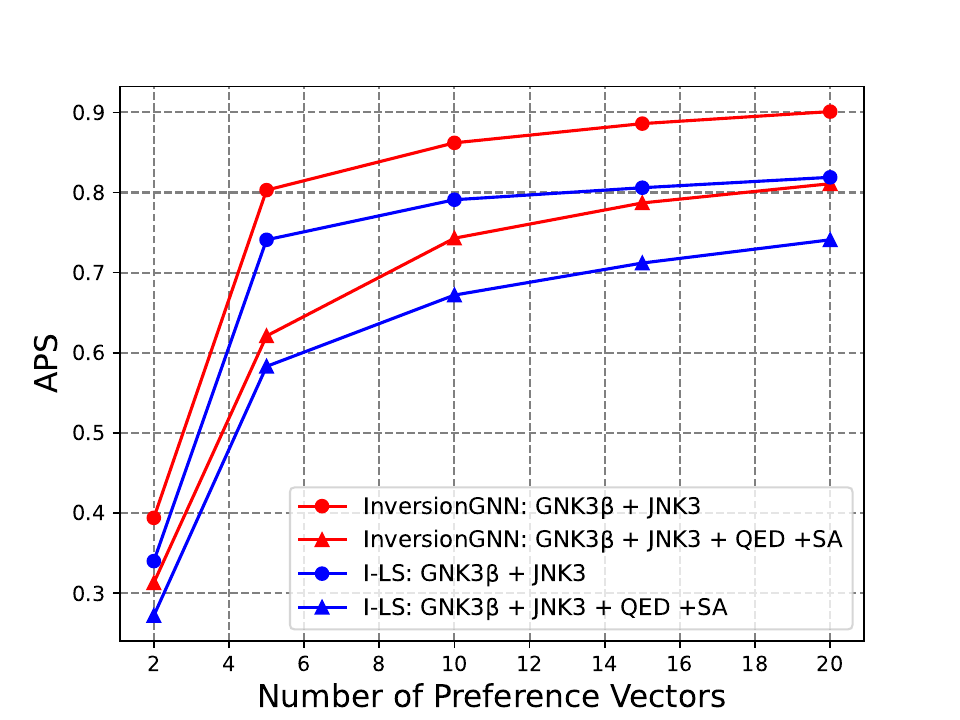}
\captionof{subfigure}{Search Efficiency}
\label{pf:c}
\end{minipage}
\vspace{-0.1cm}
 \caption{Pareto front obtained by Inversion GNN on (a) Conflicting Objectives, and (b) Correlated Objectives. (c) Search Efficiency in Multi-Objective Drug Discovery.  The number of weights represents the search scope, and the number of Oracle calls grows with the number of weights. }  
\label{crfig}
% \vspace{-0.05cm}
\end{figure*}

\section{Conclusion}\label{conclusion}
In this work, we propose a novel dual-path InversionGNN for multi-objective molecular optimization, which is effective and sample-efficient. In the direct prediction path, we incorporate complicated chemical knowledge. In the inversion generation path, we decode the acquired knowledge with the gradient w.r.t the molecule structure. To tackle the complicated multi-property chemical knowledge, we relax the discrete molecule optimization into a locally differentiable Pareto optimization problem.  Through extensive experimental evaluations, we have demonstrated the effectiveness and sample efficiency of InversionGNN in multi-objective drug discovery.

\subsubsection*{Acknowledgments}
This work was supported by HKUST – HKUST(GZ) Cross-campus Collaborative Research Scheme under the Guangdong ``1+1+1" Joint Funding Program.

\bibliography{iclr2025_conference}
\bibliographystyle{iclr2025_conference}
\clearpage

\appendix
\section{Pareto Optimal}\label{poapp}

In this work, we consider  $\displaystyle m$ tasks described by $\displaystyle f(\vx):=\left [ 
 f_i(\vx)\right ]: \R^n \rightarrow \R^m$ for any point $\vx$ in  \emph{Solution Space} $\R^n$, where each $\displaystyle f_i(\vx): \R^n \rightarrow \R, i \in [m]$ represents the performance of the $i$-th task to be maximized. Given an desired target $\displaystyle \vy \in \R^m$, we set a non-negative objective function $\displaystyle \mathcal{L}(f(\vx),\vy) = [l_1,\ldots, l_m]^\mathsf{T} : \R^m \rightarrow \mathbb{O}^m$ to be a non-negative objective function mapping the \emph{Value Space} $\displaystyle \R^m$ to the \emph{Objective Space} $\mathbb{O}^m$, where $\displaystyle l_i$ for $\displaystyle i \in [m]$ is the objective function of the $i$-th task. Hence, maximizing the performance $\displaystyle f(\vx)$ is equivalent to minimizing the objective function. Consequently, we have $\displaystyle l^{\vx'}_i -l^{\vx}_i\geq0$ if $\displaystyle f_i(\vx') \leq f_i(\vx)$ for  two points $\displaystyle \vx,\vx' \in \R^n$.

For any two points $\displaystyle \vx,\vx' \in \R^n$, $\displaystyle \vx$ dominates $\displaystyle \vx'$, denoted by $\displaystyle \mathcal{L}^{\vx'}\succeq \mathcal{L}^{\vx}$, if and only if $\displaystyle \mathcal{L}^{\vx'}-\mathcal{L}^{\vx} \in \R^m_+$, where $\mathbb{R}^m_+ := \{\mathcal{L}\in \mathbb{O}^m | l_i \geq 0, \forall i\in [m]\}$. The partial ordering $\displaystyle \mathcal{L}^{\vx'}\succeq \mathcal{L}^{\vx}$ implies $\displaystyle l^{\vx'}_i -l^{\vx}_i\geq0, \forall i\in [m]$. When $\displaystyle \vx$ strictly dominates $\displaystyle \vx'$, denoted by $\displaystyle \mathcal{L}^{\vx'}\succ \mathcal{L}^{\vx}$, it means there is at least one $i$ for which $\displaystyle l^{\vx'}_i-l^{\vx}_i > 0$. Geometrically, $\displaystyle \mathcal{L}^{\vx'}\succ \mathcal{L}^{\vx}$ means that $\displaystyle \mathcal{L}^{\vx'}$ lies in the positive cone pivoted at $\displaystyle \mathcal{L}^{\vx}$, i.e. $\displaystyle \mathcal{L}^{\vx'} \in\left\{\mathcal{L}^{\vx}\right\}+\R_+^m:=\left\{\mathcal{L}^{\vx}+\mathcal{L} \mid \mathcal{L} \in \R_{+}^m\right\}$. A point $\displaystyle \vx^*$ is said to be \textbf{Pareto optimal} if $\displaystyle \vx^*$ is not dominated by any other points in $\displaystyle \R^n$. Similarly, $\displaystyle \vx^*$ is locally Pareto optimal if $\displaystyle \vx^*$ is not dominated by any other points in the neighborhood of $\displaystyle \vx^*$, i.e. $\displaystyle \mathcal{N} (\vx^*)$. The set of all Pareto optimal solutions is defined as:
\begin{equation}
\displaystyle \mathcal{P}:=\left\{\vx^* \in \R^n \mid \forall \vx \in \R^n-\left\{\vx^*\right\}, \mathcal{L}^{\vx^*} \nsucceq \mathcal{L}^{\vx}\right\},
\end{equation}
where $\displaystyle \mathcal{L}^{\vx^*} \nsucceq \mathcal{L}^{\vx}$ represents $\displaystyle \vx^*$ is not dominated by other point $\displaystyle \vx$. The set of multi-objective values of the Pareto optimal solutions is called \textbf{Pareto front}:
\begin{equation}
\displaystyle \mathcal{F}:=\left\{\mathcal{L}^{\vx^*}  \mid \vx^* \in \mathcal{P}\right\}.
\end{equation}
In Multi-Objective Optimization, the ideal goal is to identify a set of Pareto solutions that cover all the possible preferences among objectives.

\section{Theoretical Analysis}\label{app:th}
In this section, we discuss the theoretical properties of  InversionGNN Algorithm.  

\subsection{Assumptions and Key Lemmas}

\begin{assumption}[Molecule Size Bound]\label{as2}
The sizes (i.e., number of substructures) of all the scaffolding trees generated are upper bounded by $N$.
\end{assumption}

We focus on small molecule optimization; the target molecular properties would decrease significantly
when the molecule size is too large~\citep{bickerton2012quantifying}, e.g., QED. To perform a convergence analysis, we initially establish several assumptions to characterize the geometry of the objective landscape.

\begin{definition}[Non-Uniformity]\label{defNU}
For any point $\displaystyle \vx \in \R^n$, the Non-Uniformity of its objective values $\mathcal{L}$ in relation to a given weight vector $\displaystyle r \in \R^m$ as:
\begin{equation}
\displaystyle \mu_{r}(\mathcal{L})  =\sum_{i=1}^m \hat{l}_i \log \left(\frac{\hat{l}_i}{1 / m}\right)  =\mathrm{KL}\left(\hat{\mathcal{L}}\mid \frac{\1}{m}\right),
\end{equation}
where $\hat{\mathcal{L}} = [\hat{l}_1,\ldots, \hat{l}_m]^\mathsf{T}$ and $ \hat{l}_i$ is the weighted normalization $\hat{l}_i=\frac{r_i l_i}{\sum_{i'=1}^m r_{i'} l_{i'}}$.
\end{definition}

The Kullback-Leibler (KL) divergence of $\hat{\mathcal{L}}$ from the uniform distribution $\displaystyle \frac{\1}{m}$ characterizes non-uniformity. When the objective value fulfills the weight condition, we have $\mu_{r}(\mathcal{L})=0$; otherwise, $\mu_{r}(\mathcal{L})>0$. Consequently, we prefer a lower $\mu_{r}(\mathcal{L})$.

\begin{definition}[Dominant Set]\label{defDS}
Given $\boldsymbol{x}^t$ in chemical space $\mathcal{X}$ at the $t$-th iteration, we define a Dominant Set $\mathcal{V}_{\preceq \mathcal{L}^t} \subset \mathbb{R}^m$ that contains all attainable multi-objective values that dominate the $\mathcal{L}^t$ as:
\begin{equation}
\mathcal{V}_{\preceq \mathcal{L}^t} =\left\{ \mathcal{L}\in \mathcal{O} \mid \mathcal{L}\preceq \mathcal{L}^t \right\}.
\end{equation}
\end{definition}

\begin{definition}[Uniform Set]\label{defUS}
Given a molecule $\boldsymbol{x}^t$ in the chemical space $\mathcal{X}$ at the $t$-th iteration and a specified weight vector $r \in \mathbb{R}^m$, we define a Uniform Set $\mathcal{M}_{\mathcal{L}^t}^{r} \subset \mathbb{R}^m$ that contains all attainable multi-objective values demonstrating enhanced uniformity compared to $\mathcal{L}^t$ as:
\begin{equation}
\mathcal{M}_{\mathcal{L}^t}^{r} =\left\{ \mu_{r}(\mathcal{L}) \leq \mu_{r}(\mathcal{L}^t) \right\}.
\end{equation}
\end{definition}

\begin{definition}[Admissible Set]\label{defAS}
Given a molecule $\boldsymbol{x}^t$ in the chemical space $\mathcal{X}$ at the $t$-th iteration and a specified weight vector $r \in \mathbb{R}^m$, we define a bounded Admissible Set $\mathcal{A}_{\mathcal{L}^t}^{r}\subset \mathbb{R}^m$ as:
\begin{equation}
\mathcal{A}_{\mathcal{L}^t}^{r}=\left\{\mathcal{L} \in \mathcal{O} \mid \mathcal{L} \preceq \check{\mathcal{L}}^t\right\},
\end{equation}
where $\check{\mathcal{L}}^t =\check{r}^{t}\left(1 / r_1, \cdots, 1 / r_m\right)$, and $\check{r}^{t}  =\max \left\{\mathcal{L}_j^t r_j \mid j \in[m]\right\}$.
\end{definition}

Clearly, the admissible set contains all the points in $\mathcal{O}$ that dominate the $\mathcal{L}^t$, i.e. $\mathcal{V}_{\preceq \mathcal{L}^t} \subset \mathcal{A}_{\mathcal{L}^t}^{r}$. Moreover, when  $\mu_{r}(\mathcal{L})>0$, it also contains points exhibiting superior uniformity compared to $\mathcal{L}^t$, i.e. $\mathcal{A}_{\mathcal{L}^t}^{r} \cap \mathcal{M}_{\mathcal{L}^t}^{r} \neq \emptyset$. Consequently, the admissible set encompasses the desired solution for the subsequent iteration, fulfilling both uniformity and dominating properties.

As illustrated by \citet{mahapatra2020multi}, a descent direction $d_{n d}$ will be oriented towards the weight-specific Pareto front and within a confined admissible set. Since we perform $K$ descents in each iteration, we restructure properties for InversionGNN in the molecular optimization problem as follows:

\begin{lemma}[Bounded Objective Space for the Next Iteration]\label{lemmaBS}
There exists a step size $\eta_0>0$, such that for every $\eta \in\left[0, \eta_0\right]$, InversionGNN employs $\widetilde{\mathcal{T}}_{\boldsymbol{x}^t} = \widetilde{\mathcal{T}}_{\boldsymbol{x}^t}-\eta d_{n d}$ to update differentiable scaffolding tree until convergence. Subsequently, we greedily sample a molecule as $\boldsymbol{x}^{t+1}$ from $\widetilde{\mathcal{T}}_{\boldsymbol{x}^t}^K$ by adding a substructure, if the solution is nonempty. The multi-objective value $\mathcal{L}^{t+1}$ of the new solution point  $\boldsymbol{x}^{t+1}$ lies in the $t$-th admissible set:
\begin{equation}
\mathcal{L}^{t+1} \in \mathcal{A}_{\mathcal{L}^t}^{r}.
\end{equation}
\end{lemma}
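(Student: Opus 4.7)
The plan is to establish the inclusion $\mathcal{L}^{t+1} \in \mathcal{A}_{\mathcal{L}^t}^r$ in two stages: a continuous stage in which the direction $d_{nd}$ produced by the QP (Eq.~\ref{QP}) is shown to be an admissible descent on the differentiable scaffolding tree, followed by a discrete stage in which greedy sampling preserves admissibility.

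First I would unpack the QP constraints. The inequality $\boldsymbol{\beta}^{*\top} G^\top g_j \geq 0$ for $j \in J$ is exactly $g_j^\top d_{nd} \geq 0$, so the first-order change of $l_j$ along the update $-\eta d_{nd}$ is $\delta l_j = -\eta g_j^\top d_{nd} \leq 0$. I would then split on the two regimes of $J$ defined in Eq.~\ref{QP}: in the high-KL regime $J=[m]$, every objective is weakly decreasing to first order, so $\max_j r_j l_j$ certainly does not exceed $\check{r}^t$; in the low-KL regime $J=J^* = \arg\max_j r_j l_j$, only the objectives attaining $\check{r}^t$ are forced to non-increase, and the non-argmax objectives lie strictly below the maximum level, giving a positive slack that absorbs small first-order perturbations. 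This recovers the standard EPO ``admissible descent'' property of \citet{mahapatra2020multi} adapted to our setting.

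Next I would invoke smoothness of the surrogate loss $\mathcal{L}$ with respect to the continuous parameters $(\widetilde{\mathbf{N}},\widetilde{\mathbf{A}},\widetilde{\mathbf{w}})$ of the differentiable scaffolding tree. Because Assumption \ref{as2} bounds the tree size by $N$, the parameter space is compact and $\nabla \mathcal{L}$ is Lipschitz on it with some constant $L$; choosing $\eta_0$ such that the second-order Taylor remainder is dominated by the first-order slack identified above then ensures, for every $\eta\in[0,\eta_0]$ and every inner iteration $k=1,\ldots,K$, that $\max_j r_j l_j^{t,k} \leq \check{r}^t$. Iterating to convergence of the inner loop yields a continuous tree $\widetilde{\mathcal{T}}_{\boldsymbol{x}^t}^K$ whose multi-objective value (under the surrogate) is already in $\mathcal{A}_{\mathcal{L}^t}^r$.

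The main obstacle is the discretization step, since the admissibility is defined with respect to the true Oracle evaluated on the assembled molecule $\boldsymbol{x}^{t+1}$, not the surrogate evaluated on $\widetilde{\mathcal{T}}_{\boldsymbol{x}^t}^K$. I would close this gap by noting that greedy substructure addition corresponds to rounding $\widetilde{\mathcal{T}}_{\boldsymbol{x}^t}^K$ to the nearest lattice point in the one-hot parameterization of the scaffolding tree, so the surrogate values at the sampled discrete tree differ from those at the continuous optimum by an amount controlled by the same smoothness constant. Combining this with the pretraining accuracy of the direct-prediction path ($f \approx \mathcal{O}$ on the training manifold) transfers the bound to the Oracle. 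The lemma's qualifier ``if the solution is nonempty'' then formalizes exactly the case in which the greedy sampler, restricted to candidates lying in $\mathcal{A}_{\mathcal{L}^t}^r$, returns a molecule; the continuous admissibility established above guarantees this restricted set is nonempty whenever the inner descent made progress, and in that case $\mathcal{L}^{t+1} \in \mathcal{A}_{\mathcal{L}^t}^r$ as claimed.
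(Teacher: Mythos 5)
Your continuous stage is essentially the paper's argument, only spelled out: the paper simply cites Theorem~2 of \citet{mahapatra2020multi} to conclude that the loss $\mathcal{L}_{\widetilde{\mathcal{T}}}$ of the converged continuous tree $\widetilde{\mathcal{T}}_{\boldsymbol{x}^t}^K$ lies in $\mathcal{A}_{\mathcal{L}^t}^{r}$, whereas you re-derive the admissible-descent property from the QP constraints ($g_j^\top d_{nd}\geq 0$ on the active index set $J$, case split on the two KL regimes, and a Lipschitz choice of $\eta_0$). That part is fine and arguably more self-contained than the paper's.

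The genuine gap is in your discretization stage, which is where you diverge from the paper and where your argument does not deliver the stated conclusion. You bound the discrepancy between the surrogate at the continuous optimum and the Oracle at the rounded discrete tree by ``an amount controlled by the smoothness constant'' plus the pretraining error $f\approx\mathcal{O}$. An argument of that form can only yield \emph{approximate} membership, i.e.\ $\mathcal{L}^{t+1}$ lies in an $\varepsilon$-inflation of $\mathcal{A}_{\mathcal{L}^t}^{r}$, not the exact inclusion $\mathcal{L}^{t+1}\in\mathcal{A}_{\mathcal{L}^t}^{r}$ that the lemma asserts and that Corollary~\ref{coCAS} and Theorem~\ref{THTH} subsequently rely on (they need the exact monotonicity $\check{r}^{t+1}\leq\check{r}^{t}$, which an additive rounding error would destroy). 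The paper closes this step differently: it asserts the domination relation $\mathcal{L}^{t+1}\preceq\mathcal{L}_{\widetilde{\mathcal{T}}}$ for the greedily sampled molecule and then uses that $\mathcal{A}_{\mathcal{L}^t}^{r}=\{\mathcal{L}\mid\mathcal{L}\preceq\check{\mathcal{L}}^t\}$ is downward closed under $\preceq$, so exact membership transfers by transitivity. To match the lemma as stated you would need either to justify such a domination property of the greedy sampling step, or to fold the selection into the definition of the sampler (your reading of ``if the solution is nonempty'' gestures at this, but then the substantive claim becomes the nonemptiness of the admissible candidate pool, for which ``whenever the inner descent made progress'' is not an argument). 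As written, your proof proves a weaker, perturbed version of the lemma.
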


Following Theorem 2 in EPO \citep{mahapatra2020multi}, the  empirical loss $\mathcal{L}_{\widetilde{\mathcal{T}}}$ of scaffolding tree $\widetilde{\mathcal{T}}_{\boldsymbol{x}^t}^K$ lies in the t-th admissible set $\mathcal{A}_{\mathcal{L}^t}^{r}$. Since  we greedily sample a molecule as $\boldsymbol{x}^{t+1}$ from $\widetilde{\mathcal{T}}_{\boldsymbol{x}^t}^K$, thus we have $\mathcal{L}^{t+1} \preceq \mathcal{L}_{\widetilde{\mathcal{T}}}$. Therefore, it follows that $\mathcal{L}^{t+1} \in \mathcal{A}_{\mathcal{L}^t}^{r}$. It demonstrates that for a molecule $\boldsymbol{x}^t$ at the $t$-th iteration, InversionGNN selects a molecule as $\boldsymbol{x}^{t+1}$ from $\boldsymbol{x}^t$'s neighborhood set $\mathcal{N}(\boldsymbol{x}^t)$, moving towards improved uniformity and dominating properties. It provides a theoretical guarantee for the quality of the solution.

\begin{corollary}[Convergence of Admissible Set]\label{coCAS}
The sequence of relative maximum values $\check{r}^{t}$ obtained by descending against the adjusted gradient $d_{nd}$ is monotonic with $\check{r}^{t+1}\leq \check{r}^{t}$, which means
\begin{equation}
\mathcal{A}_{\mathcal{L}^t}^{r} \subset \mathcal{A}_{\mathcal{L}^{t+1}}^{r},
\end{equation}
and the sequence of bounded sets $\{\mathcal{A}_{\mathcal{L}^{t+1}}^{r}\}$ converges.
\end{corollary}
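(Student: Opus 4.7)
The plan is to derive the scalar monotonicity $\check{r}^{t+1} \leq \check{r}^{t}$ as an immediate consequence of Lemma~\ref{lemmaBS}, then translate it into the nesting of admissible sets via Definition~\ref{defAS}, and finally invoke the monotone convergence theorem for real sequences to conclude that the sequence of bounded sets itself converges.

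First I would apply Lemma~\ref{lemmaBS}, which places $\mathcal{L}^{t+1}$ in the current admissible set $\mathcal{A}_{\mathcal{L}^{t}}^{r}$. Unpacking the definition of that set, this means $\mathcal{L}^{t+1} \preceq \check{\mathcal{L}}^{t} = \check{r}^{t}(1/r_1,\ldots,1/r_m)$, i.e., $r_i\, l_i^{t+1} \leq \check{r}^{t}$ for every $i \in [m]$. Taking the maximum over $i$ on the left-hand side gives
\begin{equation}
\check{r}^{t+1} \;=\; \max_{j \in [m]} r_j\, l_j^{t+1} \;\leq\; \check{r}^{t},
\end{equation}
which is exactly the monotonicity claim.

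Next I would read the set relation off Definition~\ref{defAS}. Since each $r_i > 0$, the scalar inequality $\check{r}^{t+1} \leq \check{r}^{t}$ lifts coordinate-wise to $\check{\mathcal{L}}^{t+1} \preceq \check{\mathcal{L}}^{t}$, and transitivity of the componentwise partial order implies that any $\mathcal{L}$ dominated by the smaller corner point $\check{\mathcal{L}}^{t+1}$ is also dominated by $\check{\mathcal{L}}^{t}$, giving the nesting between $\mathcal{A}_{\mathcal{L}^{t+1}}^{r}$ and $\mathcal{A}_{\mathcal{L}^{t}}^{r}$ claimed by the corollary. For convergence of the bounded sets themselves, I would observe that $\{\check{r}^{t}\}$ is monotonically non-increasing and bounded below by $0$ (objective values are non-negative throughout Section~\ref{sec:epo}), hence converges to some limit $\check{r}^{\infty} \geq 0$ by the standard monotone convergence theorem. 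Because each admissible set is the image of the scalar $\check{r}^{t}$ under the affine map $s \mapsto \{\mathcal{L} \in \mathcal{O} \mid \mathcal{L} \preceq s(1/r_1,\ldots,1/r_m)\}$, the nested sequence converges set-theoretically to $\bigcap_{t} \mathcal{A}_{\mathcal{L}^{t}}^{r} = \{\mathcal{L} \in \mathcal{O} \mid \mathcal{L} \preceq \check{r}^{\infty}(1/r_1,\ldots,1/r_m)\}$.

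I do not expect a genuine obstacle: Lemma~\ref{lemmaBS} already performs the heavy lifting by guaranteeing that the next iterate lies in the previous admissible box, and everything else is unpacking of definitions plus one invocation of monotone convergence. The only bookkeeping care needed is that every weight component $r_i$ is strictly positive, so that multiplying and dividing through by $r_i$ preserves the inequality direction; this is implicit in the weight-vector setup introduced alongside the QP in Section~\ref{sec:epo}.
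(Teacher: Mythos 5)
Your proposal is correct and follows essentially the same route as the paper: apply Lemma~\ref{lemmaBS} to place $\mathcal{L}^{t+1}$ in $\mathcal{A}_{\mathcal{L}^{t}}^{r}$, unpack Definition~\ref{defAS} to read off $\check{r}^{t+1}=\max_j r_j l_j^{t+1}\leq \check{r}^{t}$, and translate the scalar monotonicity into nested admissible sets; your added invocation of monotone convergence for $\{\check{r}^{t}\}$ just makes explicit the ``converges'' claim that the paper leaves implicit. One point worth noting: your (correct) derivation yields $\mathcal{A}_{\mathcal{L}^{t+1}}^{r}\subseteq \mathcal{A}_{\mathcal{L}^{t}}^{r}$ --- the admissible boxes shrink as $\check{r}^{t}$ decreases --- whereas the corollary as printed asserts the reverse inclusion $\mathcal{A}_{\mathcal{L}^{t}}^{r}\subset \mathcal{A}_{\mathcal{L}^{t+1}}^{r}$; this is a typo in the statement, and you have silently corrected it rather than introduced an error.
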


Since $\mathcal{L}^{t+1} \in \mathcal{A}_{\mathcal{L}^t}^{r}$,  we naturally get $\check{r}^{t+1}\leq \check{r}^{t}$, thus we have $\mathcal{A}_{\mathcal{L}^t}^{r} \subset \mathcal{A}_{\mathcal{L}^{t+1}}^{r}$. It demonstrates the monotonicity of $\check{r}^{t}$. Suppose InversionGNN selects a molecule as $\boldsymbol{x}^{t+1}$ from $\boldsymbol{x}^t$'s neighborhood set $\mathcal{N}(\boldsymbol{x}^t)$, where the lowest $\check{r}^{t+1}$ is precisely determined, i.e., finding a solution that maximizes $\left| \check{r}^{t+1} - \check{r}^{t} \right|$. 

\begin{assumption}[Submodularity and Smoothness] \label{as1}
Suppose $\boldsymbol{x}_{t-1}, \boldsymbol{x}_t, \boldsymbol{x}_{t+1}$ are generated successively by InversionGNN via growing a substructure on the  scaffolding tree. We assume that the corresponding objective gain (i.e., $\triangle \check{r}^t$) satisfies the diminishing
returns property:
\begin{equation}
\begin{aligned}
\check{r}^{t-1} - \check{r}^{t} \ge \check{r}^{t} - \check{r}^{t+1}, \ \  (submodularity)
\end{aligned}
\end{equation}
Submodularity plays the role of concavity/convexity in the discrete regime. On the other hand, we specify the curvature ratio of the objective function $\mathcal{L}$ by assuming
\begin{equation}
\begin{aligned}
\check{r}^{t} - \check{r}^{t+1} \ge \alpha( \check{r}^{t-1} - \check{r}^{t}), \quad 0< \alpha< 1-\frac{1}{N} <1.  \quad  (curvature) 
\end{aligned}
\end{equation}
\end{assumption}

The choice of submodularity as an assumption for our analysis was motivated by experimental observations.  For instance, in the optimization process, we noticed that the objective values, such as QED, can rapidly increase to a high point with just a few iterations. However, as we added more atoms with InversionGNN, the growth rate began to decrease, i.e. the return is diminishing when the property scores are reaching the upper bound.  This trend is observed in many properties and provides insight into our assumption.

\subsection{Theorem}
\begin{theorem}[Approximation Guarantee]\label{THTH}
Under the assumptions stated in Sec. \ref{app:th}, Given an initial molecule $\boldsymbol{x}^0$ and weight vector $\boldsymbol{r}$, InversionGNN guarantees the following approximation when performing $T$ optimization rounds:
\begin{equation}
\begin{aligned}
\mathcal{L}^T \in \mathcal{A} :=\left\{\mathcal{L} \in \mathcal{O} \mid \mathcal{L} \preceq (\gamma   \check{r}^* + (1-\gamma  )\check{r}^0) \cdot \boldsymbol{r}^{-1} \right\},
\end{aligned}
\end{equation}
where $\gamma  = \frac{1-\alpha^{T}}{(1-\alpha)N} $, $\boldsymbol{r}^{-1}$ is $(1/r_i,\ldots,1/r_m)$, $\check{\boldsymbol{r}}^*$ and $\check{\boldsymbol{r}}^0$ is the maximum relative objective value $\check{\boldsymbol{r}}^t:= \max \left\{l_j^t r_j \mid j \in[m]\right\}$ of $\boldsymbol{x}^*$ and $\boldsymbol{x}^0$.
\end{theorem}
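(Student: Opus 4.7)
The strategy is to reduce the vector-valued bound to a scalar statement about the maximum weighted coordinate $\check{r}^t$, and then to apply a greedy submodular approximation argument driven by the curvature lower bound. First, Lemma~\ref{lemmaBS} already places $\mathcal{L}^{t+1}$ in the admissible set $\mathcal{A}_{\mathcal{L}^t}^{r}$, which translates to the coordinate-wise dominance $\mathcal{L}^{t+1} \preceq \check{r}^t \cdot \boldsymbol{r}^{-1}$, and Corollary~\ref{coCAS} chains these inclusions across iterations to give $\mathcal{L}^T \preceq \check{r}^T \cdot \boldsymbol{r}^{-1}$. So the whole problem reduces to proving the scalar inequality $\check{r}^T \le \gamma\,\check{r}^* + (1-\gamma)\,\check{r}^0$.

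Next, I would introduce $\Delta_t := \check{r}^{t-1} - \check{r}^{t}\ge 0$ for the per-step decrease (non-negativity follows from Corollary~\ref{coCAS}). Telescoping gives $\check{r}^0 - \check{r}^T = \sum_{t=1}^{T}\Delta_t$. The curvature half of Assumption~\ref{as1} yields $\Delta_{t+1} \ge \alpha \Delta_t$, and by a straightforward induction $\Delta_t \ge \alpha^{t-1}\Delta_1$, hence
\begin{equation}
\check{r}^0 - \check{r}^T \;\ge\; \Delta_1 \sum_{t=1}^{T}\alpha^{t-1} \;=\; \Delta_1\cdot\frac{1-\alpha^T}{1-\alpha}.
\end{equation}

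To convert this into the stated approximation, I would lower-bound $\Delta_1$ in terms of the global gap $\check{r}^0-\check{r}^*$ using Assumption~\ref{as2}: every scaffolding tree has at most $N$ nodes, so the substructure-growth trajectory that extends $\boldsymbol{x}^0$ to a Pareto-optimal $\boldsymbol{x}^*$ contains at most $N$ InversionGNN steps. Telescoping along this extended trajectory, $\check{r}^0 - \check{r}^* = \sum_{t=1}^{N'}\Delta_t$ with $N' \le N$, and applying submodularity $\Delta_t \le \Delta_1$ termwise gives $\check{r}^0 - \check{r}^* \le N\,\Delta_1$, i.e.\ $\Delta_1 \ge (\check{r}^0 - \check{r}^*)/N$. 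Plugging this into the preceding display produces $\check{r}^0 - \check{r}^T \ge \gamma(\check{r}^0 - \check{r}^*)$ with $\gamma = (1-\alpha^T)/[(1-\alpha)N]$, which rearranges to $\check{r}^T \le \gamma\,\check{r}^* + (1-\gamma)\,\check{r}^0$ and, combined with $\mathcal{L}^T \preceq \check{r}^T \cdot \boldsymbol{r}^{-1}$, gives the claimed membership in $\mathcal{A}$.

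The main obstacle will be the last identification: $\check{r}^*$ is defined via a Pareto-optimal $\boldsymbol{x}^*$, not as an iterate of InversionGNN, so some care is required to argue that submodularity and curvature, which Assumption~\ref{as1} states along InversionGNN-generated trajectories, continue to apply when we telescope the full gap $\check{r}^0 - \check{r}^*$. The cleanest way is to treat the InversionGNN expansion as running until the scaffolding tree saturates at $N$ substructures and to identify $\check{r}^*$ with the relative value at saturation; then all $\Delta_t$ live on a single trajectory of greedy substructure additions, and both geometric assumptions apply verbatim. Once that identification is made, the remainder is a short telescoping-plus-geometric-series computation.
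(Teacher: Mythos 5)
Your proposal follows essentially the same route as the paper's proof: reduce the vector statement to the scalar quantity $\check{r}^t$, telescope the per-step decrements, use the curvature bound to sum a geometric series giving $\check{r}^0-\check{r}^T \ge \frac{1-\alpha^{T}}{1-\alpha}\,\Delta_1$, and use submodularity together with the size bound $N$ to argue that the first-step gain is at least $\frac{1}{N}(\check{r}^0-\check{r}^*)$. The one place you diverge is precisely the obstacle you flag at the end. The paper resolves it by keeping \emph{two} trajectories in play --- the $k$-step path from $\boldsymbol{x}^0$ to the optimum $\boldsymbol{x}^*$ and the greedy InversionGNN path --- and inserting the bridging inequality $\hat{r}^0-\hat{r}^1 \ge r^0-r^1$ (Eq.~\ref{th3}), justified by the standing assumption that the neighborhood minimization of $\check{r}^{t+1}$ is solved exactly, so the first greedy step gains at least as much as the first step of the optimal path; submodularity applied along the \emph{optimal} path then yields $r^0-r^1 \ge \frac{1}{k}(r^0-r^*) \ge \frac{1}{N}(r^0-r^*)$. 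Your proposed workaround --- identifying $\check{r}^*$ with the value at which the greedy trajectory saturates --- would indeed keep all decrements on a single trajectory, but it proves a different statement: in the theorem $\check{r}^*$ is the relative value of the optimum $\boldsymbol{x}^*$, and the greedy saturation point need not attain it. You should therefore replace that identification with the greedy-versus-optimal first-step comparison. (As a side remark, both your argument and the paper's apply the submodularity half of Assumption~\ref{as1} along the path toward $\boldsymbol{x}^*$, even though the assumption is stated only for successive InversionGNN iterates; the paper glosses over this as well, so it is not a defect specific to your write-up.)
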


\begin{proof} 

In the following steps of the proof, to simplify mathematical notation, we substitute $r^{t}$ for $\check{r}^{t}$. Starting from  $\boldsymbol{x}^0$, suppose the path to optimum $\boldsymbol{x}^*$ with the weight $r$ is
\begin{equation}
\boldsymbol{x}^0 \rightarrow \boldsymbol{x}^1 \rightarrow \boldsymbol{x}^2 \rightarrow \cdots \rightarrow \boldsymbol{x}^k=\boldsymbol{x}^*,
\end{equation}
where each step, one substructure is added. 

For InversionGNN, we run $T \in [k,N]$ iterations, and the path produced by InversionGNN is
\begin{equation}
\hat{\boldsymbol{x}}^0(\boldsymbol{x}^0) \rightarrow \hat{\boldsymbol{x}}^1 \rightarrow \hat{\boldsymbol{x}}^2 \rightarrow \cdots \rightarrow \hat{\boldsymbol{x}}^T, \ \ \text{where} \ \ T \ge k.
\end{equation}
For the optimum $\boldsymbol{x}^*$, based on the submodularity in Assumption \ref{as1} we have
\begin{equation}
\begin{aligned}
k\left(r^0 - r^1 \right) \geq  \sum^k_{j=1} (r^{j-1} - r^j) = r^0 - r^k = r^0 - r^*.
\end{aligned}
\end{equation}
From assumption~\ref{as2}, it follows that 
\begin{equation}\label{th1}
r^0 - r^1 \geq \frac{1}{k}(r^0 - r^*)\geq \frac{1}{N}(r^0 - r^*).
\end{equation}

For the molecular $\boldsymbol{z}^T$ found by InversionGNN, based on  curvature ratio in Assumption \ref{as1} we have
\begin{equation}
\begin{aligned}
 \hat{r}^{T-1} - \hat{r}^T  \geq \alpha \left(\hat{r}^{T-2} - \hat{r}^{T-1}\right) \geq \cdots \geq \alpha^{T-1}\left(\hat{r}^0 - \hat{r}^1\right).
\end{aligned}
\end{equation}

Then we have
\begin{equation}\label{th2}
\begin{aligned}
 \hat{r}^0 - \hat{r}^T =  \sum^T_{j=1} (r^{j-1} - r^j)
\geq \sum^T_{j=1} \alpha^{j-1} (\hat{r}^0 - \hat{r}^1) = \frac{1-\alpha^{T}}{1-\alpha}\left((\hat{r}^0 - \hat{r}^1)\right).
\end{aligned}
\end{equation}
Since InversionGNN pick up a molecule as $\boldsymbol{x}^{t+1}$ from $\boldsymbol{x}^t$’s neighborhood set $\mathcal{N}(\boldsymbol{x}^t)$ with lowest $\check{r}^{t+1}$ is exactly solved, i.e. $\hat{r}^1 \leq r^1 $, and $\hat{r}^0 = r^0$. Thus we have 
\begin{equation}\label{th3}
\begin{aligned}
\hat{r}^0 - \hat{r}^1 \ge r^0 - r^1.
\end{aligned}
\end{equation}

From Eq.~\ref{th1}, Eq.~\ref{th2} and Eq.~\ref{th3}, it follows that

\begin{equation}
\begin{aligned}
 r^0 - \hat{r}^T \geq \frac{1-\alpha^{T}}{(1-\alpha)N} (r^0 - r^*).
\end{aligned}
\end{equation}

Thus we have:
\begin{equation}
\begin{aligned}
\hat{r}^T \leq \frac{1-\alpha^{T}}{(1-\alpha)N} r^* + (1-\frac{1-\alpha^{T}}{(1-\alpha)N})r^0. 
\end{aligned}
\end{equation}

Finally, it follows that
\begin{equation}
\begin{aligned}
\mathcal{L}^T \in \mathcal{A} :=\left\{\mathcal{L} \in \mathcal{O} \mid \mathcal{L} \preceq (\gamma  r^* + (1-\gamma )r^0) \cdot r^{-1} \right\},
\end{aligned}
\end{equation}

\end{proof}

\section{Theoretical comparison of computational complexity}\label{app_computational_complexity}
The current multi-objective molecular optimization methods can be categorized into three classes: (1) gradient-based Pareto search (InversionGNN), (2) multi-objective genetic algorithm, and (3) multi-objective Bayesian optimization, as illustrated in Section \ref{introduction}. We provide a theoretical analysis of their computational bottlenecks during one iteration:
\begin{itemize}
\item Gradient-based Pareto search: If we perform $K$ gradient descent steps in each iteration, the time complexity is $O(K(m^2n+m^3))$, where $n$ is the dimension of the gradients and $m$ is the number of objectives, usually $n\gg m$.
\item Genetic Algorithm:  The computational bottleneck lies in the non-dominated sorting, which has a time complexity of $O(mp^2)$, where $p$ is the population size and $m$ is the number of objectives.
\item Bayesian Optimization: The computational bottleneck lies in training the Gaussian process, which has a time complexity of $O(q^3)$, where $q$ is the number of training data points. 
\end{itemize}

\section{Implementations Details}\label{expdetails}
\subsection{Molecular Representation}
A molecular graph is a representation of a molecule, consisting of atoms as nodes and chemical bonds as edges. Nonetheless, challenges such as chemical validity constraints, ring integrity, and extensive calculations hinder the explicit reconstruction of potential connectivity. To tackle this,  \citet{vae3} introduced a scaffolding tree, a spanning tree that employs nodes as \emph{substructures} to model a higher-level representation of a molecule. 

A scaffolding tree, denoted by $\mathcal{T}_{\boldsymbol{x}} = \{\mathbf{N},\mathbf{A},\mathbf{w}\}$, serves as a high-level representation of a molecule $\boldsymbol{x} \in \mathcal{X}$. Each node is a member of the substructure set $\mathcal{S}$ (also referred to as the vocabulary set). $\mathcal{T}_{\boldsymbol{x}}$ consists of three components: (i) the node indicator matrix defined as $\mathbf{N} \in \left\{0,1\right\}^{K \times |S|}$, where each row of $N$ is a one-hot vector indicating the substructure to which the node belongs; (ii) the adjacency matrix denoted by $\mathbf{A} \in \{0,1\}^{K\times K}$, where $\mathbf{A}_{ij}=1$ when the $i$-th and the $j$-th nodes are connected, and $0$ when they are unconnected; (iii) $\mathbf{w}=[1,\ldots,1]^\mathsf{T} \in \mathbb{R}^K$, signifies that the $K$ nodes are equally weighted. We convert molecules to scaffolding tree for training the surrogate Oracle.

To modify the scaffolding tree $\mathcal{T}_{\boldsymbol{x}}$, we employ its differentiable version, $\widetilde{\mathcal{T}}_{\boldsymbol{x}}$, as proposed by ~\citet{dst}. The basic scaffolding tree, $\mathcal{T}_{\boldsymbol{x}}$, can be transformed into a tree containing $K+K_{expand}$ nodes, denoted by $\widetilde{\mathcal{T}}_{\boldsymbol{x}}= \{\widetilde{\mathbf{N}},\widetilde{\mathbf{A}},\widetilde{\mathbf{w}}\}$, through the addition of an expansion node set  $\mathcal{V}_{expand}=\left\{u_v \mid v \in \mathcal{V}_{{\mathcal{T}_{\boldsymbol{x}}}}\right\}$, where $\left|\mathcal{V}_{expand}\right|=K_{expand}=K$. It is crucial to note that $\widetilde{\mathcal{T}}_{\boldsymbol{x}}$ contains learnable parameters, which can be interpreted as conditional probability. It can be utilized to sample a new tree through processes such as node shrinking, replacement, or expansion. Each scaffolding tree corresponds to multiple molecules, as substructures can be combined in various ways.

\paragraph{Assemble the Scaffolding Tree into Molecule.} We conduct following steps: 
(a) Ring-atom connection. When connecting atom and ring in a molecule, an atom can be connected to any possible atoms in the ring. Ring-ring connection. (b) When connecting ring and ring, there are two general ways, (1) one is to use a bond (single, double, or triple) to connect the atoms in the two rings. (2) another is two rings share two atoms and one bond.

\subsection{Details of I-LS}\label{appls}
The baseline method with Linear Scalarization (LS) is summaried in Algorithm~\ref{alg:55}. LS dose not compute the non-dominated gradient $d_{nd}$ but instead linearly weights the gradient using weights, i.e.,  $d=Gr$. We keep the remaining parts consistent with InversionGNN.
\begin{algorithm}[h]
\DontPrintSemicolon
\SetAlgoLined
   \KwIn {Input molecule $\boldsymbol{x}^0 \in \mathcal{X}$, weight vector $r \in \mathbb{R}^m$, and step size $\eta>0$.}
   \KwOut {Generated Molecule $\boldsymbol{x}^*$.}
   Initialization.\;
   Train surrogate Oracle according to Eq. \ref{train}.\;
   \For{$t=0,\ldots,T$}{
   Convert molecule $\boldsymbol{x}^t$ to scaffolding tree $\widetilde{\mathcal{T}}_{\boldsymbol{x}^t}^0$;\;
   \For{$k=0,\ldots,K$}{
   Compute gradients of each property objective w.r.t. $\widetilde{\mathcal{T}}_{\boldsymbol{x}^t}^k$: $G= \nabla \mathcal{L}  = [g_1,\ldots, g_m]$;\;
   Calculate the direction of descent $d_{ls} = Gr$;\;
   Update the differentiable scaffolding tree using $\widetilde{\mathcal{T}}_{\boldsymbol{x}^t}^{k+1} = \widetilde{\mathcal{T}}_{\boldsymbol{x}^t}^k - \eta d_{ls}$.\;
   }
   Sample discrete $\mathcal{T}_{\boldsymbol{x}^{t+1}}$ from continuous $\widetilde{\mathcal{T}}_{\boldsymbol{x}^t}^{K}$ and assemble it to molecule $\boldsymbol{x}^{t+1}$;\;
   }
   \caption{Linear Scalarization (I-LS)}
   \label{alg:55}
\end{algorithm}
\subsection{Weight Vector}\label{WV}
In this section, we describe the detailed process for generating a weight vector uniformly distributed in the objective space. We list the algorithm \ref{alg:2} and \ref{alg:3}  for 2 and 4 objectives.
\begin{algorithm}[h]
\DontPrintSemicolon
\SetAlgoLined
   
   Generate a uniformly distributed variable, $u$, ranging from 0 to 1.\;
   Compute coordinates' angle: $\theta = \frac{\pi}{2} u$.\;
    Compute Cartesian coordinates: $r_1 = \cos {\theta}$, $r_2 = \sin {\theta}$.\;
   \caption{Generate Weight for Two Objective}
   \label{alg:2}
\end{algorithm}
\begin{algorithm}[h]
\DontPrintSemicolon
\SetAlgoLined
   Generate two uniformly distributed variables, $u$, $v$ and $z$, ranging from 0 to 1.\;
   Compute spherical coordinates' inclination angle and azimuth angle: $\theta = \frac{\pi}{2} u$, $\phi = \arccos{v}$, $\sigma  = \arccos{z}$.\;
    Compute Cartesian coordinates:  $r_1 = \sin{\phi} \cos{\theta} \sin{\sigma}, r_2 = \sin{\phi} \sin{\theta} \sin{\sigma}$, $r_3 = \sin{\sigma} \cos {\phi}$, $r_4 = \cos {\sigma}$.\;
   \caption{Generate Weight for Four Objective}
   \label{alg:3}
\end{algorithm}

\subsection{InversionGNN Setup}\label{expapp}
Most of the settings follow the DST~\citep{dst}. We implemented InversionGNN using Pytorch 1.7.1, Python 3.7.9. Both the size of substructure embedding and hidden size of GCN are $d=100$. 
The depth of GNN $L$ is 3.  In each generation, we keep $C=10$ molecules for the next iteration.  
The learning rate is 1e-3 in training and inference procedure. 
 We set the iteration $T$ to a large enough number and tracked the result. When Oracle calls budget is used up, we stop it. All results in the tables are from experiments up to $T=50$ iterations. For I-LS, We only replace the objective function in InversionGNN with  Linear Scalarization, and other settings are consistent with InversionGNN.

\subsection{Baselines}
In this section, we describe the detailed experimental setting for baseline methods. Most of the settings follow the original papers.
\begin{itemize}[leftmargin=*]

\item[$\bullet$] \textbf{LigGPT} is a string-based distribution learning model with a Transformer as decoder~\citep{bagal2021liggpt}, we trained it for 10 epochs using the Adam optimizer with a learning rate of $6e-4$.  

\item[$\bullet$] \textbf{GCPN} (Graph Convolutional Policy Network)~\citep{searching1} leveraged graph convolutional network and policy gradient to optimize the reward function that incorporates
target molecular properties and adversarial loss. we trained it using Adam optimizer with 1e-3 initial learning rate, and batch size is 32.  

\item[$\bullet$] \textbf{MolDQN} (Molecule Deep Q-Networks)~\citep{rl2} formulate the molecule generation procedure as a Markov Decision Process (MDP) and use Deep Q-Network to solve it. Adam is trained Adam optimizer with 1e-4 as the initial learning rate, $\epsilon$ is annealed from 1 to 0.01 in a piecewise linear way. 

\item[$\bullet$] \textbf{GA+D} (Genetic Algorithm with Discriminator network)~\citep{nigam2020augmenting} uses a deep
neural network as a discriminator to enhance exploration in a genetic algorithm and is trained using the Adam optimizer with a learning rate of $1e-3$, $\beta$ is  set it to 10. 

\item[$\bullet$] \textbf{MARS}~\citep{mars}  leverage Markov chain Monte Carlo sampling (MCMC) on
molecules with an annealing scheme and an adaptive proposal. It is trained using Adam optimizer with 3e-4 initial learning rate. 

\item[$\bullet$] \textbf{RationaleRL}~\citep{rl3} is a deep generative model that grows a molecule atom-byatom from an initial rationale (subgraph). It is trained using Adam optimizer on both pre-training and fine-tuning with initial learning rates of 1e-3, 5e-4, respectively. The annealing rate is 0.9. 

\item[$\bullet$] \textbf{ChemBO} (chemical Bayesian optimization) ~\citep{korovina2020chembo} leverage Bayesian optimization. It also explores the synthesis graph in a sample-efficient way and produces synthesizable candidates. Following the default setting in the original paper, the number
of steps of acquisition optimization is set to 20. The initial pool size is set to 20, while the maximal pool size is 1000.

\item[$\bullet$] \textbf{BOSS}  (Bayesian Optimization over String Space)~\citep{moss2020boss} builds a Gaussian process surrogate model based on Sub-sequence String Kernel, which naturally supports SMILES strings with variable length, and maximizing acquisition function efficiently for spaces with syntactical constraints. The population size is set to 100, the generation (evolution) number is set to 100.

\item[$\bullet$] \textbf{DST}  (Differentiable Scaffolding Tree)~\citep{dst} utilizes a learned knowledge network to convert discrete chemical structures to locally differentiable ones.   DST enables a gradient-based optimization on a chemical graph structure by back-propagating.

\item[$\bullet$] \textbf{MOGFN-PC} (weight-conditional GFlowNets)~\citep{jain2023multi} is a Reward-conditional GFlowNets based on Linear Scalarization. They introduce the Weighted-log-sum that can help in scenarios where all objectives are to be optimized simultaneously, and the scalar reward from Weighted-Sum can be dominated by a single reward.

\item[$\bullet$] \textbf{HN-GFN}~\citep{zhu2024sample} is a multi-objective drug discovery method based on Bayesian optimization. It leverages the hypernetwork-based GFlowNets as an acquisition function optimizer.

\item[$\bullet$] \textbf{RetMol}  (Retrieval-Based Molecular Generation)~\citep{wang2022retrieval} retrieves and fuses
the exemplar molecules with the input molecule, which is trained by a new selfsupervised objective that predicts the nearest neighbor of the input molecule.

\end{itemize}

 \section{Additional Experimental Results}

\subsection{Molecules Generated by InversionGNN}
We provide  several molecules synthesized via the InversionGNN approach. 

(1) \textbf{Molecules with JNK3 and GSK3$\beta$ scores}. Each score independently represents the respective values for JNK3 and GSK3$\beta$, see Figure~\ref{two}.

(2) \textbf{Molecules with highest average QED, normalized-SA, JNK3 and GSK3$\beta$ scores}.  These four scores symbolize the values for QED, normalized SA, JNK3, and GSK3$\beta$, respectively, see Figure~\ref{four}.

\subsection{Molecules Generated by Linear Scalarization}
We exhibit the molecular graphs produced through Linear Scalarization. Alongside these visual representations, their corresponding property scores are included, and the loss ratio is calculated using the formula $ratio = \frac{l_{JNK3}}{l_{GSK3 \beta}}$. This supplementary information further elaborates on the experimental results outlined in Section 6.3 of the paper, as illustrated in Figure~\ref{di}.

\subsection{Molecules Generated by InversionGNN}
The Oracle requires realistic optimization tasks, which can often be time-consuming. To further verify the Oracle efficiency, we explore a special setting of molecule optimization where the budget of Oracle calls is limited to a fixed number (2K, 5K, 10K, 20K, 50K) and compare the optimization performance. For GCPN, MolDQN, GA+D, and MARS, the number of learning iterations is determined by the Oracle call budget. To ensure a fair comparison with DST, InversionGNN, and I-LS utilize approximately 80\% of the budget to label the dataset (i.e., for training the GNN), reserving the remaining budget for de novo design. Specifically, for each budget (2K, 5K, 10K, 20K, and 50K), we allocate 1.5K, 4K, 8K, 16K, and 40K Oracle calls, respectively, for labeling the data used in GNN training. Figure \ref{budgetfig} illustrates the APS of the top 100 molecules across different Oracle budgets. Notably, InversionGNN shows a significant advantage compared to all the baseline methods in limited-budget settings. 

\begin{figure}[h]
\centering
\centerline{\includegraphics[width=0.6\columnwidth]{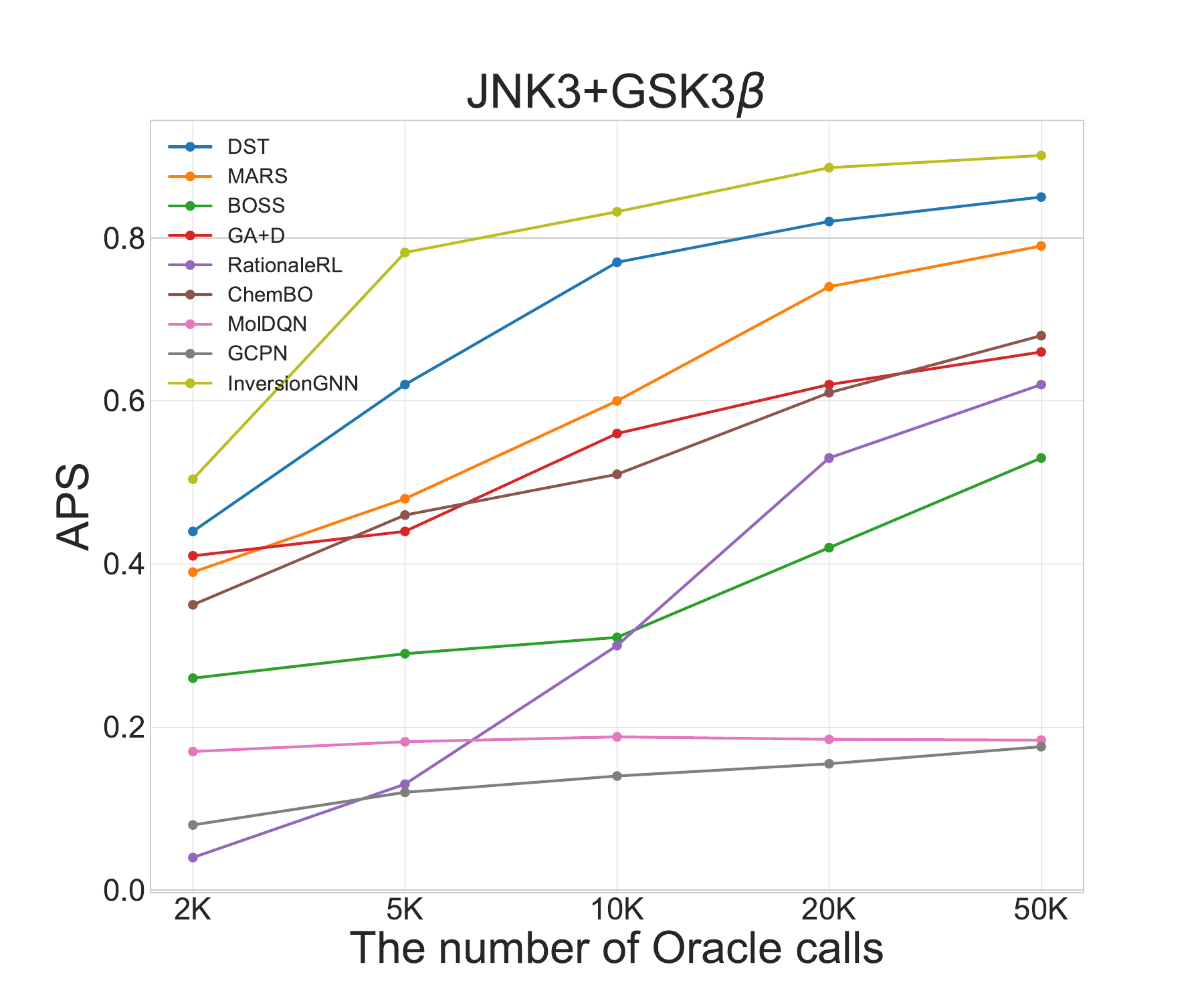}}
\caption{ Oracle Efficiency Test. }
\label{budgetfig}
\vspace{-0.2cm}
\end{figure}

\begin{figure*}[h]
\centering
\centerline{\includegraphics[width=0.8\columnwidth]{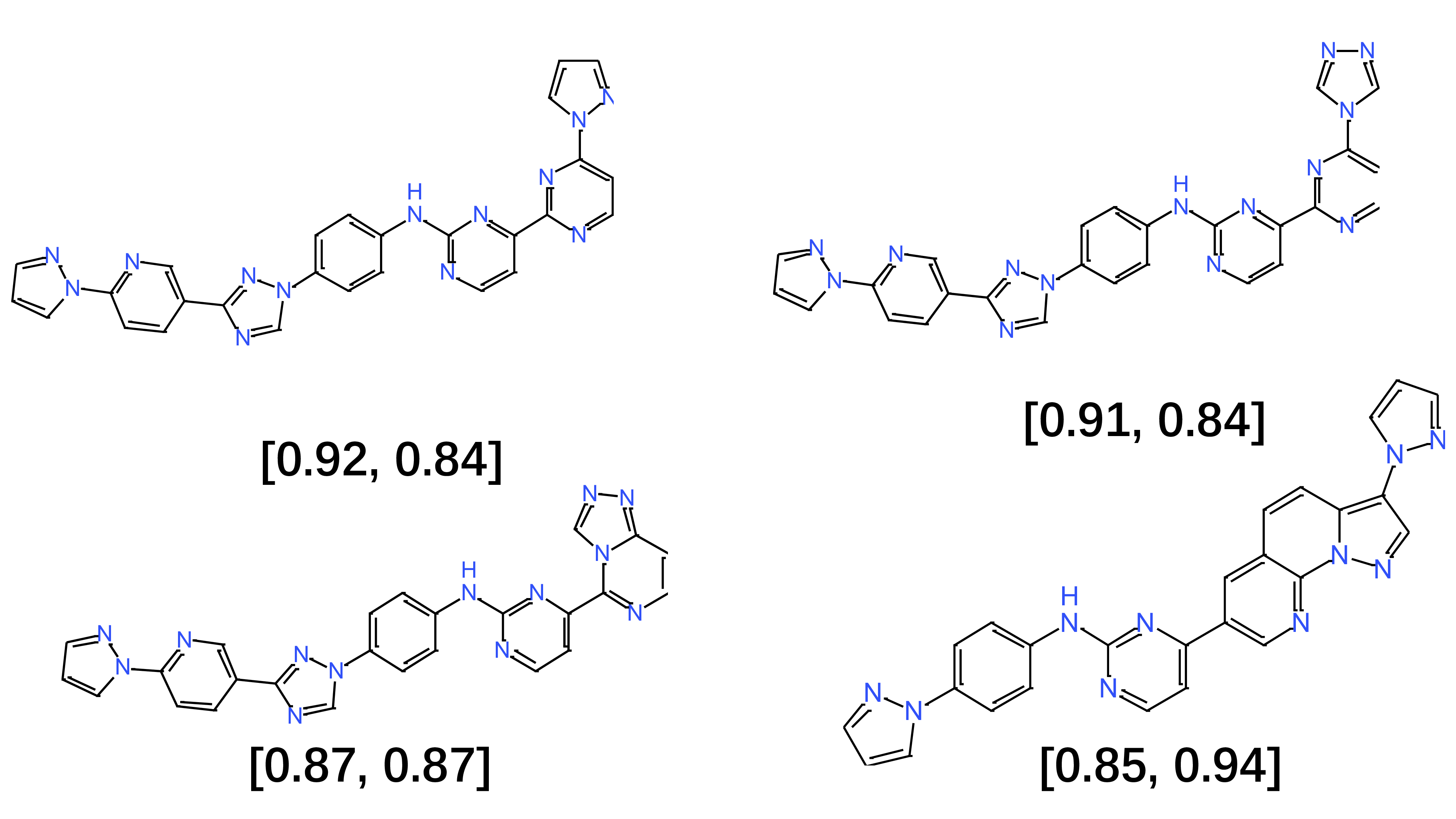}}
\caption{ Generated molecules by InversionGNN. These four scores symbolize the values for JNK3 and GSK3$\beta$, respectively.}
\label{two}
\end{figure*}
\begin{figure*}[h]
\centering
\centerline{\includegraphics[width=0.8\columnwidth]{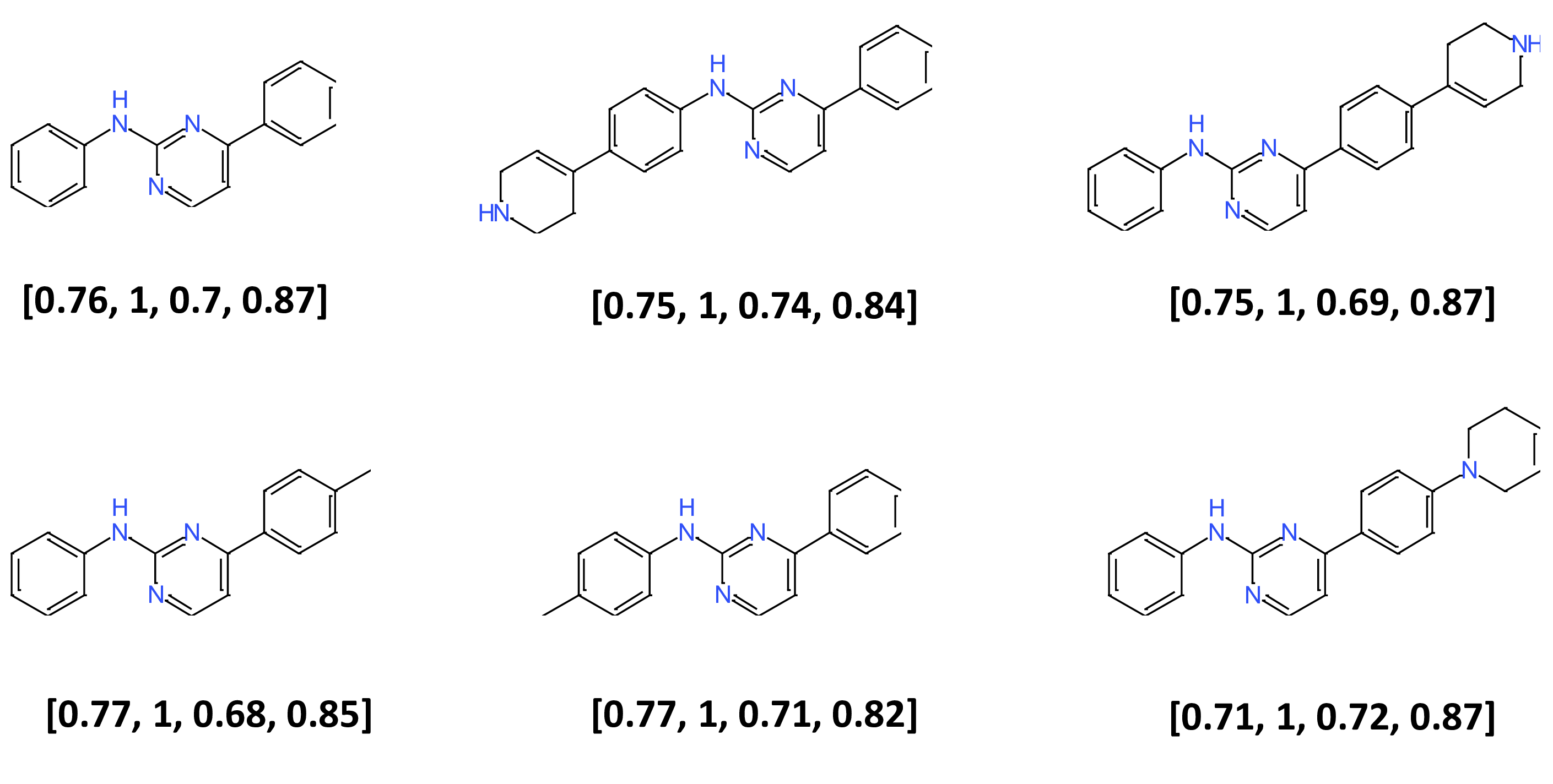}}
\caption{ Generated molecules by InversionGNN. These four scores symbolize the values for QED, normalized SA, JNK3, and GSK3$\beta$, respectively.}
\label{four}
\end{figure*}
\begin{figure*}[h]
\centering
\centerline{\includegraphics[width=0.7\columnwidth]{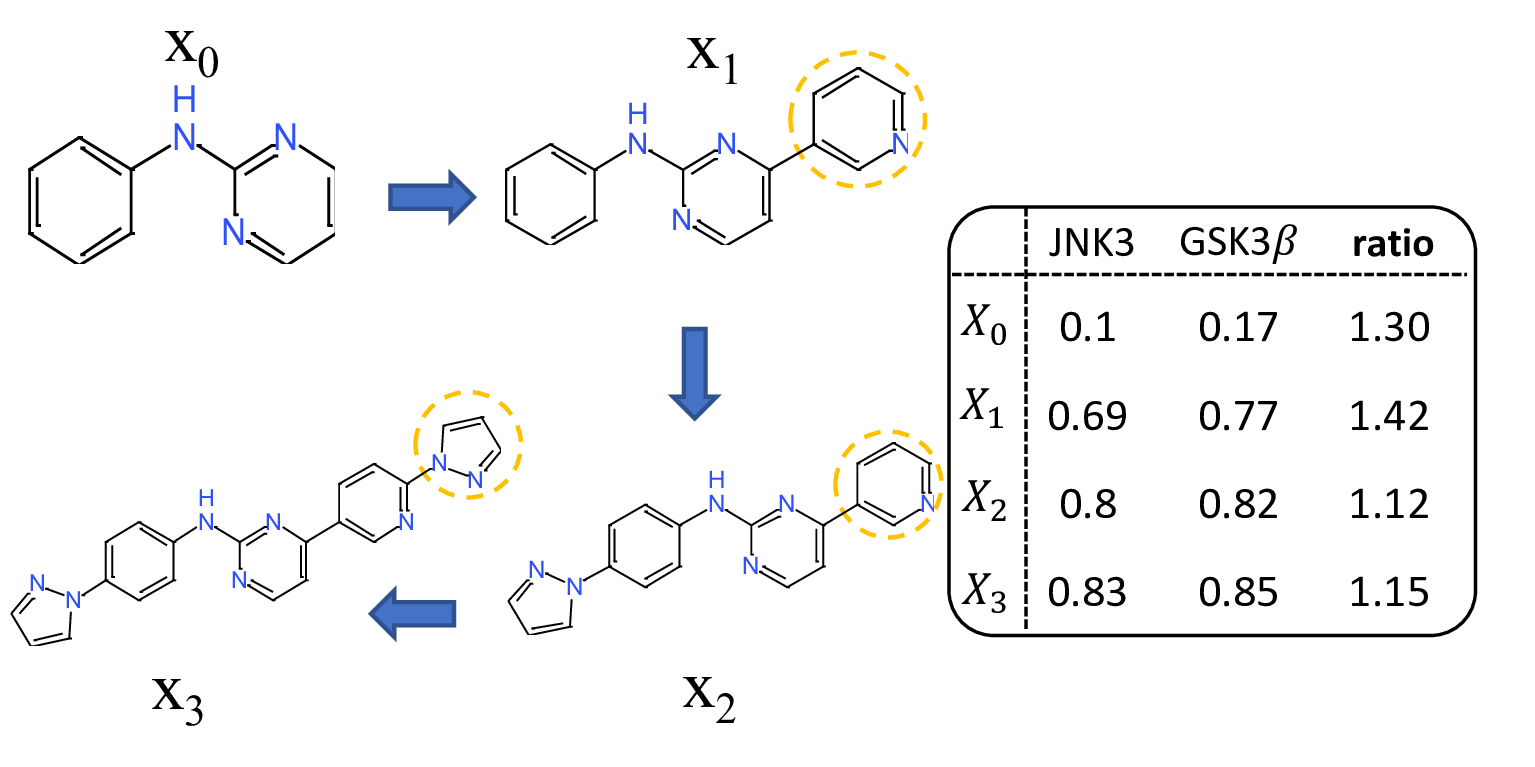}}
\caption{ Generated molecules by I-LS, property scores and loss ratio.}
\label{di}
\end{figure*}

\end{document}